%% file: example_paper.tex
\documentclass{article}

\usepackage{microtype}
\usepackage{graphicx}
\usepackage{subfigure}
\usepackage{booktabs} % for professional tables
\usepackage{longtable}
\usepackage{float}

\usepackage{hyperref}

\usepackage[accepted]{icml2025}

\usepackage{amsmath}
\usepackage{amssymb}
\usepackage{mathtools}
\usepackage{amsthm}

\usepackage[capitalize,noabbrev]{cleveref}

\theoremstyle{plain}
\newtheorem{theorem}{Theorem}[section]
\newtheorem{proposition}[theorem]{Proposition}

\theoremstyle{definition}

\theoremstyle{remark}

\usepackage[textsize=tiny]{todonotes}

\icmltitlerunning{Auditing Forgetting in Limited Memory Language Models}

\begin{document}

\twocolumn[
\icmltitle{Auditing Forgetting in Limited Memory Language Models}

% It is OKAY to include author information, even for blind
% submissions: the style file will automatically remove it for you
% unless you've provided the [accepted] option to the icml2025
% package.

% List of affiliations: The first argument should be a (short)
% identifier you will use later to specify author affiliations
% Academic affiliations should list Department, University, City, Region, Country
% Industry affiliations should list Company, City, Region, Country

% You can specify symbols, otherwise they are numbered in order.
% Ideally, you should not use this facility. Affiliations will be numbered
% in order of appearance and this is the preferred way.
\icmlsetsymbol{equal}{*}

\begin{icmlauthorlist}
\icmlauthor{Arya Raeesi}{yyy}
\icmlauthor{Hanna Roed}{yyy}
\end{icmlauthorlist}

\icmlaffiliation{yyy}{University of California, Berkeley, Berkeley, California, United States of America}

\icmlcorrespondingauthor{Arya Raeesi}{aryaraeesi@berkeley.edu}
\icmlcorrespondingauthor{Hanna Roed}{hanna.roed@berkeley.edu}

\renewcommand{\topfraction}{0.9}
\renewcommand{\bottomfraction}{0.9}
\renewcommand{\textfraction}{0.1}
\renewcommand{\floatpagefraction}{0.8}

% You may provide any keywords that you
% find helpful for describing your paper; these are used to populate
% the "keywords" metadata in the PDF but will not be shown in the document
\icmlkeywords{Machine Learning, ICML}

\vskip 0.3in
]

% this must go after the closing bracket ] following \twocolumn[ ...

% This command actually creates the footnote in the first column
% listing the affiliations and the copyright notice.
% The command takes one argument, which is text to display at the start of the footnote.
% The \icmlEqualContribution command is standard text for equal contribution.
% Remove it (just {}) if you do not need this facility.

%\printAffiliationsAndNotice{}  % leave blank if no need to mention equal contribution
\printAffiliationsAndNotice{\icmlEqualContribution} % otherwise use the standard text.

\begin{abstract}
Limited Memory Language Models (LMLMs) externalize factual knowledge to a database to enable deletion-based unlearning without retraining. Existing evaluations measure post-deletion correctness in aggregate and cannot tell whether a deleted fact persists through residual parametric memory, alternative retrieval paths, or near-neighbor retrieval artifacts. We propose a causal auditing framework that holds the model fixed and varies the database state at inference time across three interventions: \texttt{FULL}, \texttt{DEL-ON}, and \texttt{DEL-OFF}. The framework decomposes post-deletion behavior into parametric leakage $L(f)$, retrieval-mediated correctness $R(f)$, and a retrieval artifact rate grounded in the inference-time retrieval trace. We apply it to $12{,}228$ alias-closure deletions across thirteen databases, including four adversarial topologies (\textit{Base}, \textit{Alias}, \textit{Noise}, \textit{Collision}) we construct in three domains, and six prompt formulations. Parametric leakage is near zero in every variant and every prompt style: the model rarely returns the deleted answer in the absence of retrieval. The residual that does survive lives in the retrieval graph: retrieval-mediated correctness and the retrieval artifact rate match within rounding everywhere, so post-deletion correctness is, in our audit, predominantly reconstituted from near-neighbor retrieval. This residual ranges from $0.7\%$ on the released LMLM database to $13.6\%$ on the most adversarial variant, and prompt formulation does not independently control how much of a deleted fact survives. These results suggest that, for this class of LMLM and deletion procedure, the unlearning boundary is drawn primarily by the database administrator rather than by the model.
\end{abstract}

\section{Introduction}
\label{sec:introduction}

Modern language models increasingly rely on hybrid architectures that combine parametric knowledge with external memory. Limited Memory Language Models (LMLMs) are a prominent example of this paradigm, explicitly separating linguistic competence encoded in model parameters from factual knowledge stored in an external database \cite{zhao2025lmlm}. As illustrated in Figure~\ref{fig:lmlm}, an LMLM retains the linguistic competence of a standard language model but routes factual recall through an external database rather than holding it in parameters. This design enables deletion-based unlearning, where removing entries from the database is intended to eliminate access to specific facts without requiring retraining \cite{zhao2025lmlm}. Such capabilities are particularly important for applications involving data governance, privacy, and model editing.

\begin{figure}[H]
    \centering
    \includegraphics[width=1\columnwidth]{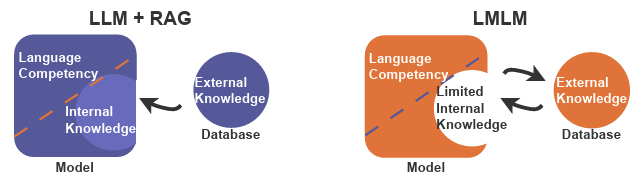}
    \caption{Comparison of a standard retrieval-augmented language model (LLM + RAG) and a LMLM. Both architectures pair a parametric model with an external database, but LMLMs are pre-trained to limit the internal storage of factual knowledge, so factual recall is routed through the external store rather than reconstructed from parameters \cite{zhao2025lmlm}.}
    \label{fig:lmlm}
\end{figure}

However, it remains unclear whether deletion in these systems truly removes knowledge. Existing evaluations of forgetting typically measure whether a model produces the correct answer before and after deletion, but do not distinguish the underlying mechanism of post-deletion correctness \cite{zhao2025lmlm}. A model may still answer correctly due to residual parametric memory, alternative retrieval paths, or semantically related matches in the external database. As a result, current metrics cannot determine whether knowledge has been successfully externalized or whether it persists internally in the model.

In this work, we propose a causal auditing framework for analyzing forgetting in LMLMs. Our approach introduces three controlled interventions, FULL, where the database is intact and retrieval is enabled, DEL-ON, where relevant entries are deleted while retrieval remains enabled, and DEL-OFF, where the same deletion is applied and retrieval is disabled. By comparing model behavior across these conditions, we isolate the contribution of external retrieval and quantify residual internal knowledge. This enables a decomposition of post-deletion behavior into parametric leakage, retrieval-mediated correctness, and retrieval artifacts.

We implement this framework using facts drawn directly from the LMLM database \cite{zhao2025lmlmcode} and other self-made databases\footnote{Code and custom databases are available at \url{https://github.com/raeesiarya/LMLMAudit}.}, and apply a verified deletion procedure that removes all canonical and alias-equivalent representations of a fact. We further log retrieval traces during inference to attribute model outputs to explicit database evidence. This controlled setup provides a principled method for auditing whether knowledge has been successfully externalized in LMLMs.

\section{Related Work}
\label{sec:related_work}
Our work sits at the intersection of three lines of research. First, retrieval-augmented language models pair parametric representations with external memory accessed at inference time \cite{lewis2020rag, guu2020realm}, and the LMLM architecture we audit \cite{zhao2025lmlm} is a recent extension that pre-trains the model to limit internal storage of factual knowledge so that retrieval becomes the primary factual channel. This line of work shows that external retrieval can improve factual accuracy \cite{karpukhin2020dpr}, but leaves open the question of whether knowledge is genuinely externalized or partly retained in parameters \cite{mallen2023trust}.

Second, knowledge-editing methods such as ROME \cite{meng2022rome} and MEMIT \cite{meng2023memit} address the same question from the opposite direction: rather than externalizing knowledge, they locate and modify parametric associations in place \cite{yao2023editing}. LMLMs aim for the cleaner separation that ROME and MEMIT bypass, which makes the LMLM setting a natural place to ask whether the separation actually holds.

Third, work on machine unlearning has probed whether specific training data can be removed from a model after the fact \cite{bourtoule2021unlearning}, and has tended to find that residual traces are difficult to fully eliminate \cite{lizzo2026unlearning, carlini2021extracting}. Our audit takes up this question in the LMLM setting, where the question becomes tractable in a new way: because the database is the intended factual store, deletion can be applied at inference time and the resulting behavior decomposed into a parametric channel and a retrieval channel separately. The framework we develop in Section~\ref{sec:method_setup} is, to our knowledge, the first to make that decomposition per-fact and to attribute surviving correctness to specific retrieval candidates.

\section{Method Setup: Causal Audit Framework}
\label{sec:method_setup}

We evaluate whether Limited Memory Language Models actually forget facts when those facts are removed from the external database. Each fact is represented as a subject--relation--object tuple, such as \textit{Geri Halliwell -- Famous For -- Spice Girls}. At inference time, retrieval provides the model with relevant database entries as additional context before the model generates an answer. Thus, when retrieval is enabled, the model is not reading the entire database directly; instead, a retrieval step selects relevant facts and inserts them into the model's input context. Appendix~\ref{app:prompt_formulations} gives examples of the prompt formulations used in the experimental grid.

Our audit compares three intervention states. In \texttt{FULL}, the target fact remains in the database and retrieval is enabled. This measures normal database-supported accuracy. In \texttt{DEL-ON}, the target fact is deleted but retrieval remains enabled. This tests whether the answer can still be recovered through alternative database entries, aliases, semantically related facts, or retrieval artifacts. In \texttt{DEL-OFF}, the target fact is deleted and retrieval is disabled. This isolates parametric recall, since the model must answer without retrieved evidence.

We use these interventions to decompose post-deletion correctness into three mechanisms. Parametric leakage occurs when the model answers correctly in \texttt{DEL-OFF}, indicating that the deleted fact may still be stored in the model parameters. Retrieval-mediated correctness occurs when the model is correct in \texttt{DEL-ON} but not in \texttt{DEL-OFF}, indicating that retrieval helped recover the answer after deletion. Retrieval artifacts occur when the model produces the correct answer even though the deleted fact is not directly available as retrieved evidence. Formally, for a fact $f=(s,r,o)$ and intervention condition $c$, let $Y(f,c)$ denote the normalized model prediction. We define parametric leakage as
\[
L(f)=\mathbb{I}[Y(f,\textsc{DEL-OFF})=o],
\]
which indicates whether the model can recover the deleted fact without retrieval.

We define retrieval-mediated correctness as
\[
R(f)=\mathbb{I}[Y(f,\textsc{DEL-ON})=o \wedge Y(f,\textsc{DEL-OFF})\neq o],
\]
which captures cases where retrieval enables the correct answer after deletion.

Across a fact set $\mathcal{F}$, the empirical leakage rate is
\[
\hat{L}=\frac{1}{|\mathcal{F}|}\sum_{f\in\mathcal{F}} L(f)
=
\frac{1}{|\mathcal{F}|}\sum_{f\in\mathcal{F}}
\mathbb{I}[Y(f,\textsc{DEL-OFF})=o].
\]
Analogously, we estimate retrieval-mediated correctness by averaging $R(f)$ over $\mathcal{F}$.

\section{Experimental Setup}
\label{sec:experiment_setup}

In addition to the released LMLM database \cite{zhao2025lmlmcode}, we developed databases for countries, politicians and sports. Within each of these three themes, we built four database variants (\textit{Base}, \textit{Alias}, \textit{Noise}, and \textit{Collision}), for a total of twelve custom databases. These four variants are designed to stress-test a distinct mechanism by which a deleted fact could remain accessible after canonical removal. \textit{Base} contains only the canonical $(s, r, o)$ triplet for each fact, leaving retrieval with no alternative path. \textit{Alias} stores the same fact only under aliased subject and relation forms, with no canonical entry, and tests whether alias-closure deletion catches every surface realization. \textit{Noise} augments Base with decoy triplets that route to the same object through paraphrased subjects (e.g., \emph{Government of United States $\rightarrow$ Seat of Government $\rightarrow$ Washington, D.C.}), probing whether retrieval can recover the deleted answer via near-neighbor paraphrases. \textit{Collision} augments Base with near-miss triplets that share the subject but route to a different object (e.g., \emph{United States $\rightarrow$ Largest City $\rightarrow$ New York City}), probing whether retrieval drifts onto a confusable neighbor and returns a plausible but incorrect answer. Table \ref{tab:custom_dataset_variants} outlines an example of these four variants for the politician domain. We evaluate each target fact under six prompt formulations: direct questions, paraphrased questions, contextual questions, cloze prompts, continuations, and few-shot prompts; examples are provided in Appendix~\ref{app:prompt_formulations}.

Crossing the six prompt sets, the three intervention states, and the thirteen databases yields a fully crossed evaluation grid in which every target fact is scored on matched inputs across all conditions. For each cell we record exact match, token-level precision, recall, and F1, together with the cross-state quantities $L(f)$, $R(f)$, and the retrieval artifact rate; retrieval traces are logged at every \texttt{FULL} and \texttt{DEL-ON} call so that post-deletion correctness can be attributed to explicit database evidence rather than implicit model behavior. Figure \ref{fig:setup_chart} outlines this pipeline.

\begin{figure}[H]
    \centering
    \includegraphics[width=1\columnwidth]{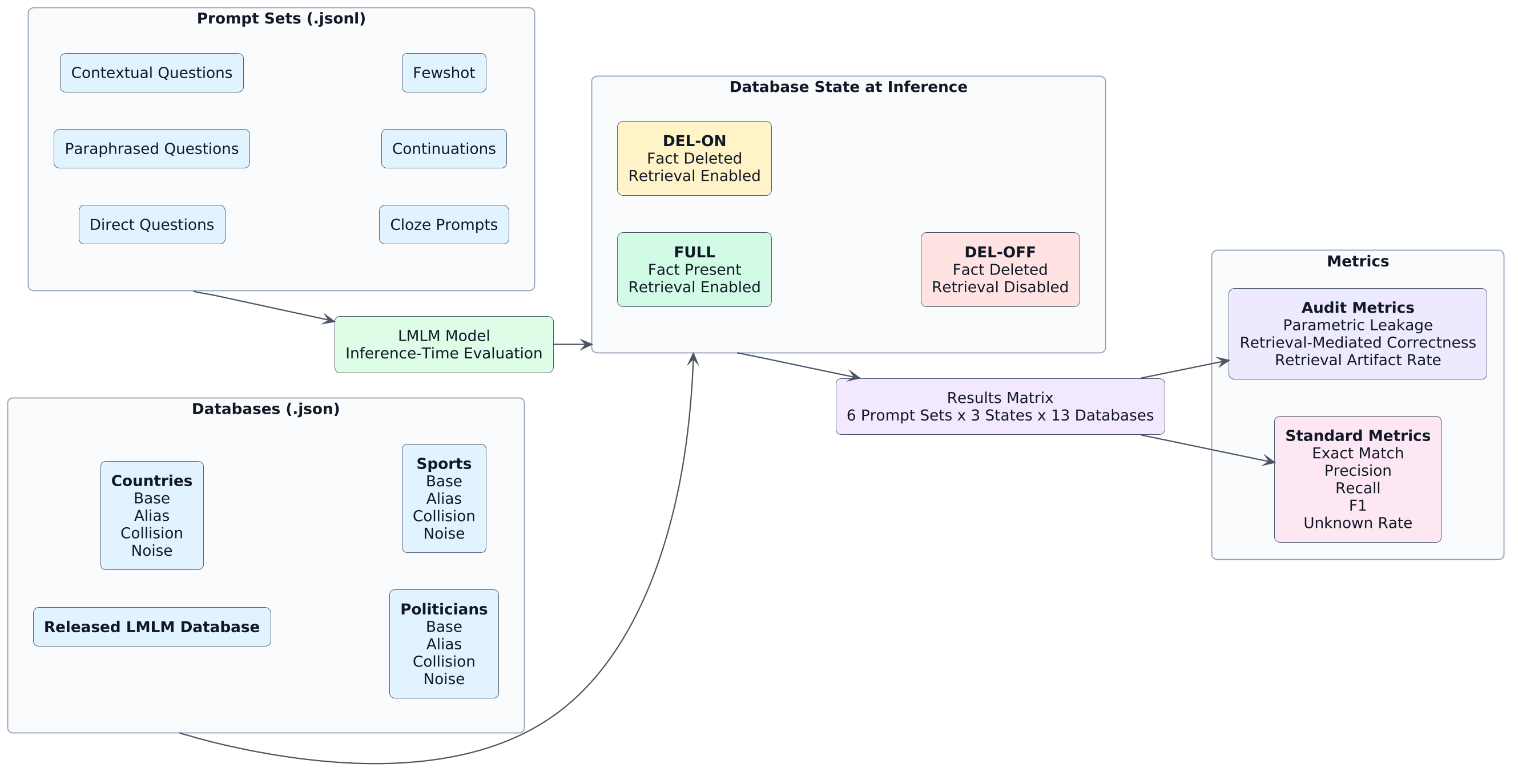}
    \caption{Overview of the evaluation pipeline. We evaluate six prompt sets using the released LMLM database, and specialized databases for countries, politicians and sports. We evaluate the databases under three inference-time database states: \texttt{FULL}, \texttt{DEL-ON}, and \texttt{DEL-OFF}. This yields a $6 \times 3 \times 13$ result matrix for each evaluation metric.}
    \label{fig:setup_chart}
\end{figure}

For each target fact, we construct the deletion set by enumerating the canonical triplet and all alias-equivalent triplets whose subject, relation, and object match the target under our alias mapping. We then remove this set from the database and verify deletion by checking that no retained triplet is gold-equivalent to the target. The same prompt is evaluated under FULL, DEL-ON, and DEL-OFF, and the generated answer is normalized before scoring against the gold object. During FULL and DEL-ON runs, we also save the retrieved candidates returned to the model. A DEL-ON answer is counted as a retrieval artifact when the normalized answer matches the gold object but none of the retained retrieval candidates is gold-equivalent to the deleted fact.

\section{Results}
\label{sec:results}

Our reference point throughout the results is the aggregate evaluation reported in the original LMLM paper~\cite{zhao2025lmlm}, which uses a single FactScore~\cite{min2023factscores} drop when the database is disabled and a single TOFU~\cite{maini2024tofu} forget-quality $p$-value per unlearning step. Both quantities collapse the post-deletion behavior of every fact into a single number, so they cannot indicate which channel a surviving correct answer came through. Our framework refines that aggregate signal into per-fact attributions across the three intervention states, and we read each result below against the corresponding aggregate quantity from the original paper.

We run the LMLM under all three interventions on every cell of the prompt $\times$ database grid. Across the $78$ (prompt file, database) cells, this yields $12{,}228$ paired (\texttt{DEL-ON}, \texttt{DEL-OFF}) evaluations together with an equal number of \texttt{FULL} baselines, for a total of $36{,}684$ model completions. All reported quantities are count-weighted averages over fact-paired groups, so a prompt file with more target facts contributes proportionally to the aggregate. We confine the present section to direct observations and defer cross-figure interpretation to Section~\ref{sec:analysis}.

We begin with the variant-level view. Figure~\ref{fig:del-on-attribution-variant} attributes \texttt{DEL-ON} correctness to its three components separately for the four custom variants and the released LMLM database. In particular, the retrieval artifact bar isolates cases in which \texttt{DEL-ON} returns the gold object even though no gold-equivalent triplet appears among the retrieval-trace candidates retained after deletion.

\begin{figure}[H]
    \centering
    \includegraphics[width=1\columnwidth]{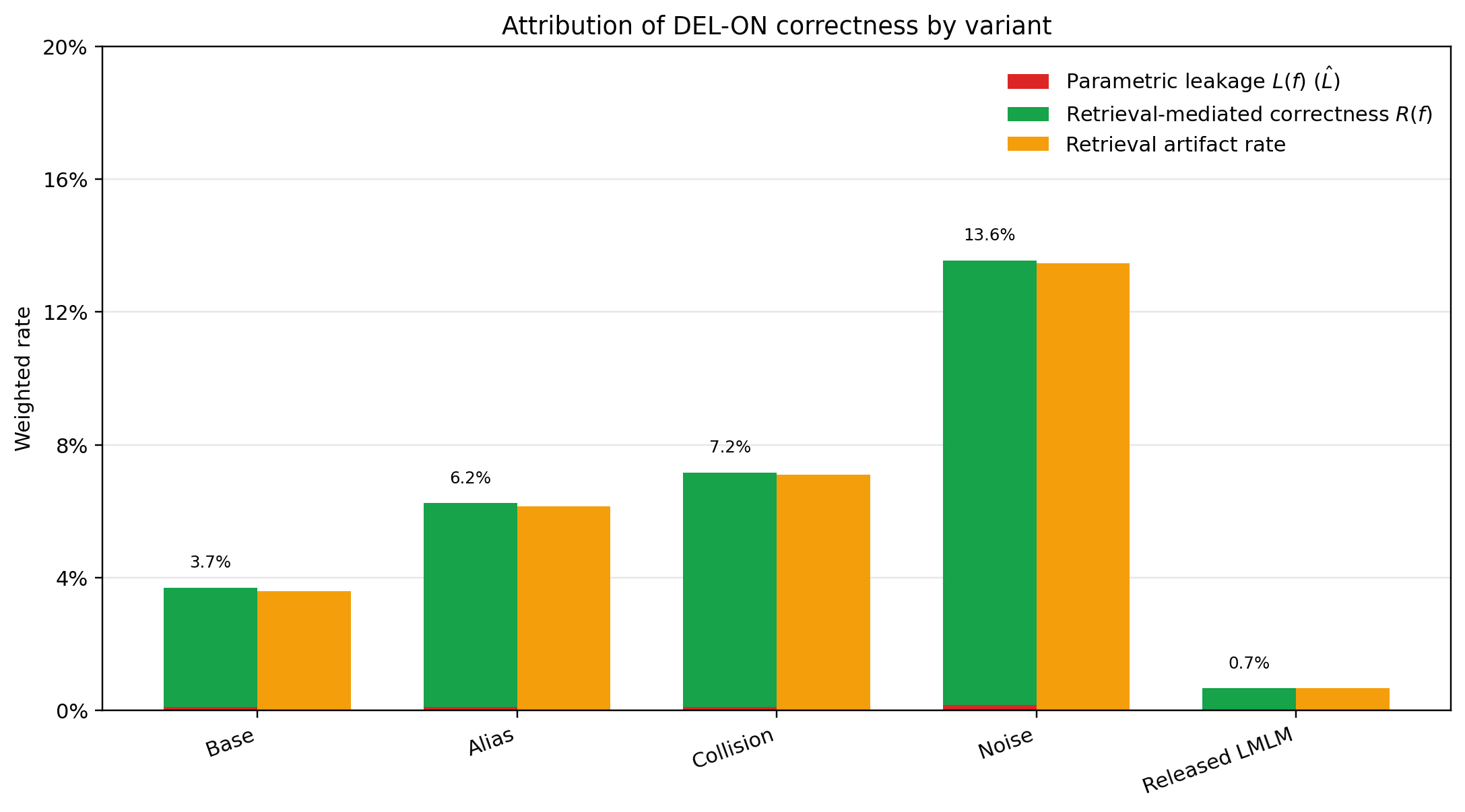}
    \caption{Attribution of \texttt{DEL-ON} correctness by database variant. The stacked left bar in each pair decomposes \texttt{DEL-ON} correctness into parametric leakage $L(f)$ (red, $\hat{L}$) and retrieval-mediated correctness $R(f)$ (green); the orange bar shows the retrieval artifact rate. Parametric leakage is near zero in every variant. The combined $L(f) + R(f)$ stack is $3.7\%$ for \textit{Base}, $6.2\%$ for \textit{Alias}, $7.2\%$ for \textit{Collision}, $13.6\%$ for \textit{Noise}, and $0.7\%$ for the released LMLM database, with the orange artifact bar at approximately the same height as the green bar in each variant.}
    \label{fig:del-on-attribution-variant}
\end{figure}

We next turn to the prompt-style axis. Figure~\ref{fig:del-on-attribution-prompt} reports the same decomposition averaged over all four custom variants and the released LMLM database, so that variant identity is collapsed and only the effect of prompt formulation remains. As before, parametric leakage stays near zero across the panel, although a small red residue is visible under direct and few-shot prompts and is essentially absent under the others. The combined $L(f) + R(f)$ stack varies from $6.1\%$ on cloze prompts to $9.9\%$ on direct questions, while the orange artifact bar matches the green $R(f)$ bar within rounding in every prompt style.

\begin{figure}[H]
    \centering
    \includegraphics[width=1\columnwidth]{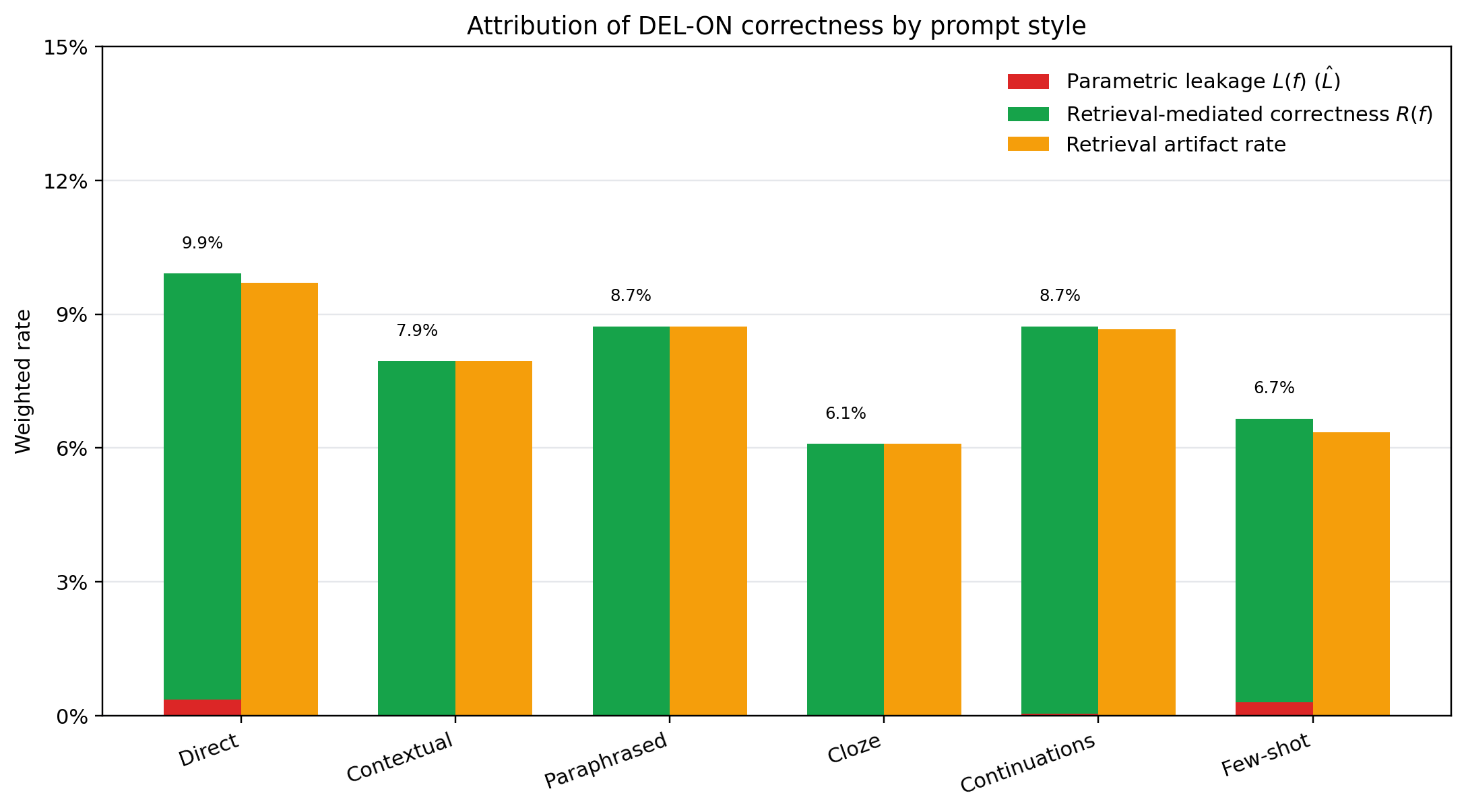}
    \caption{Attribution of \texttt{DEL-ON} correctness by prompt style. The left bar in each pair stacks parametric leakage $L(f)$ (red, $\hat{L}$) and retrieval-mediated correctness $R(f)$ (green); the orange bar shows the retrieval artifact rate. Leakage is near zero across prompt styles, with a small residue under direct and few-shot. The $L(f) + R(f)$ stack is $9.9\%$ (direct), $7.9\%$ (contextual), $8.7\%$ (paraphrased), $6.1\%$ (cloze), $8.7\%$ (continuations), and $6.7\%$ (few-shot); the orange bar matches the green within rounding in every style.}
    \label{fig:del-on-attribution-prompt}
\end{figure}

Having attributed \texttt{DEL-ON} correctness, we now widen the lens to all three intervention states. Figure~\ref{fig:token-f1-prompt-state} reports weighted token F1 by prompt style and state, so that the post-deletion residual can be compared against both the \texttt{FULL} baseline and the retrieval-disabled \texttt{DEL-OFF} condition. Under \texttt{FULL}, F1 spans a wide range, from $\sim$17\% on few-shot prompts to $\sim$57\% on continuations. Under \texttt{DEL-ON}, by contrast, the same prompt styles compress into a narrow band of roughly $7$--$10\%$. As a result, the largest \texttt{FULL}-to-\texttt{DEL-ON} drop falls on continuations (about $48$ points), while the smallest falls on few-shot prompts (about $10$ points). Finally, under \texttt{DEL-OFF}, F1 is visually indistinguishable from zero in every prompt style.

\begin{figure}[htbp]
    \centering
    \includegraphics[width=1\columnwidth]{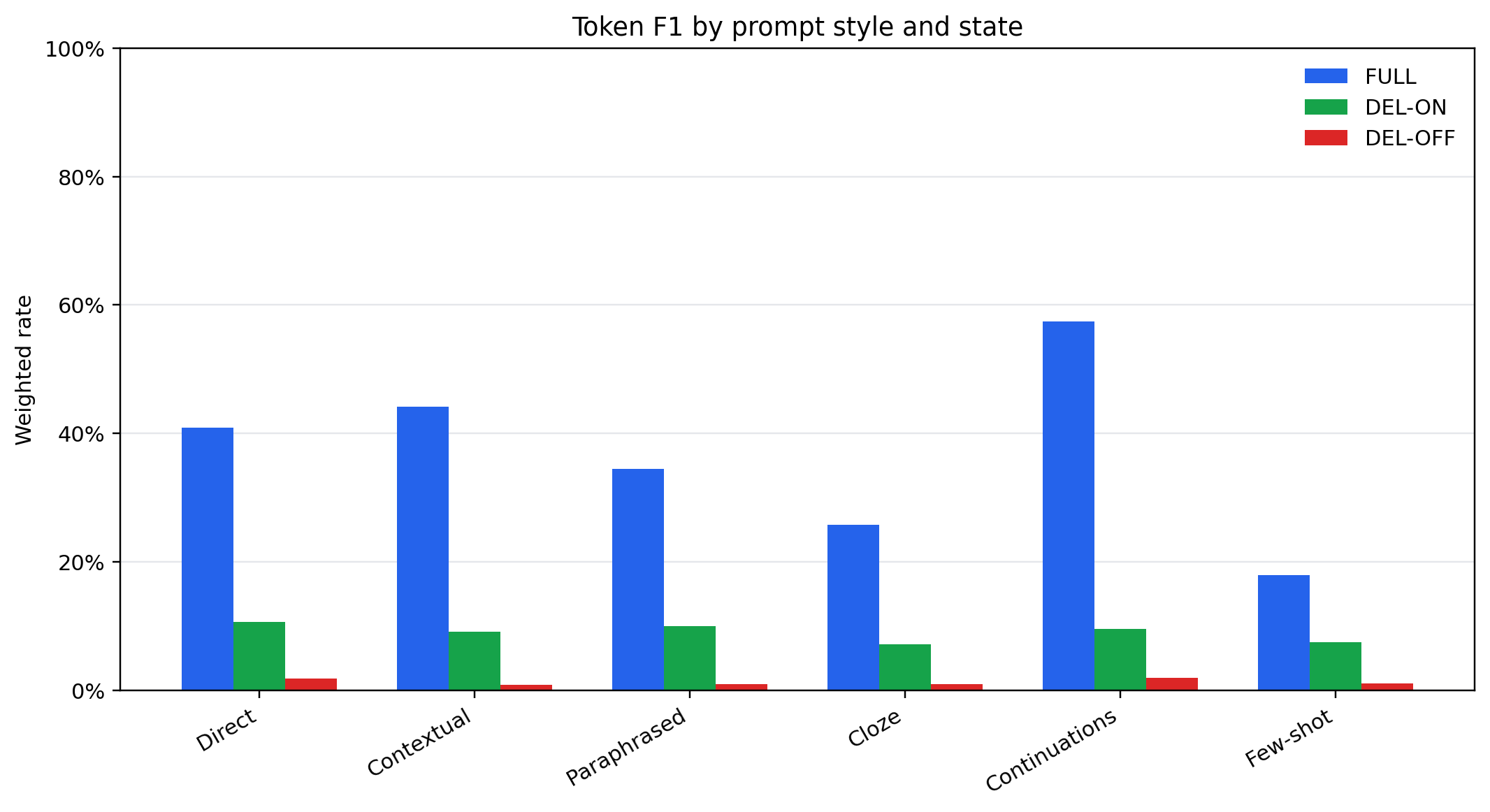}
    \caption{Weighted token F1 by prompt style and intervention state. \texttt{FULL} F1 is $\sim$41\% for direct, $\sim$44\% for contextual, $\sim$34\% for paraphrased, $\sim$26\% for cloze, $\sim$57\% for continuations, and $\sim$17\% for few-shot prompts. \texttt{DEL-ON} F1 sits between $\sim$7\% and $\sim$10\% across all six prompt styles. \texttt{DEL-OFF} F1 is near zero in every prompt style.}
    \label{fig:token-f1-prompt-state}
\end{figure}

Figure~\ref{fig:exact-match-variant-state} mirrors this three-state cut along the database-variant axis and uses exact match in place of token F1, so that the same trajectory can be read off the variants we constructed for the audit. Under \texttt{FULL}, accuracy is similar across \textit{Base}, \textit{Alias}, and \textit{Noise}, all clustered near $33\%$, and noticeably higher on \textit{Collision} and the released LMLM database, both near $46\%$. The corresponding \texttt{FULL}-to-\texttt{DEL-ON} drop is about $30$ points on \textit{Base}, $27$ on \textit{Alias}, $39$ on \textit{Collision}, $19$ on \textit{Noise}, and $45$ on the released LMLM database. Under \texttt{DEL-OFF}, exact match remains at or near zero in every variant.

\begin{figure}[H]
    \centering
    \includegraphics[width=1\columnwidth]{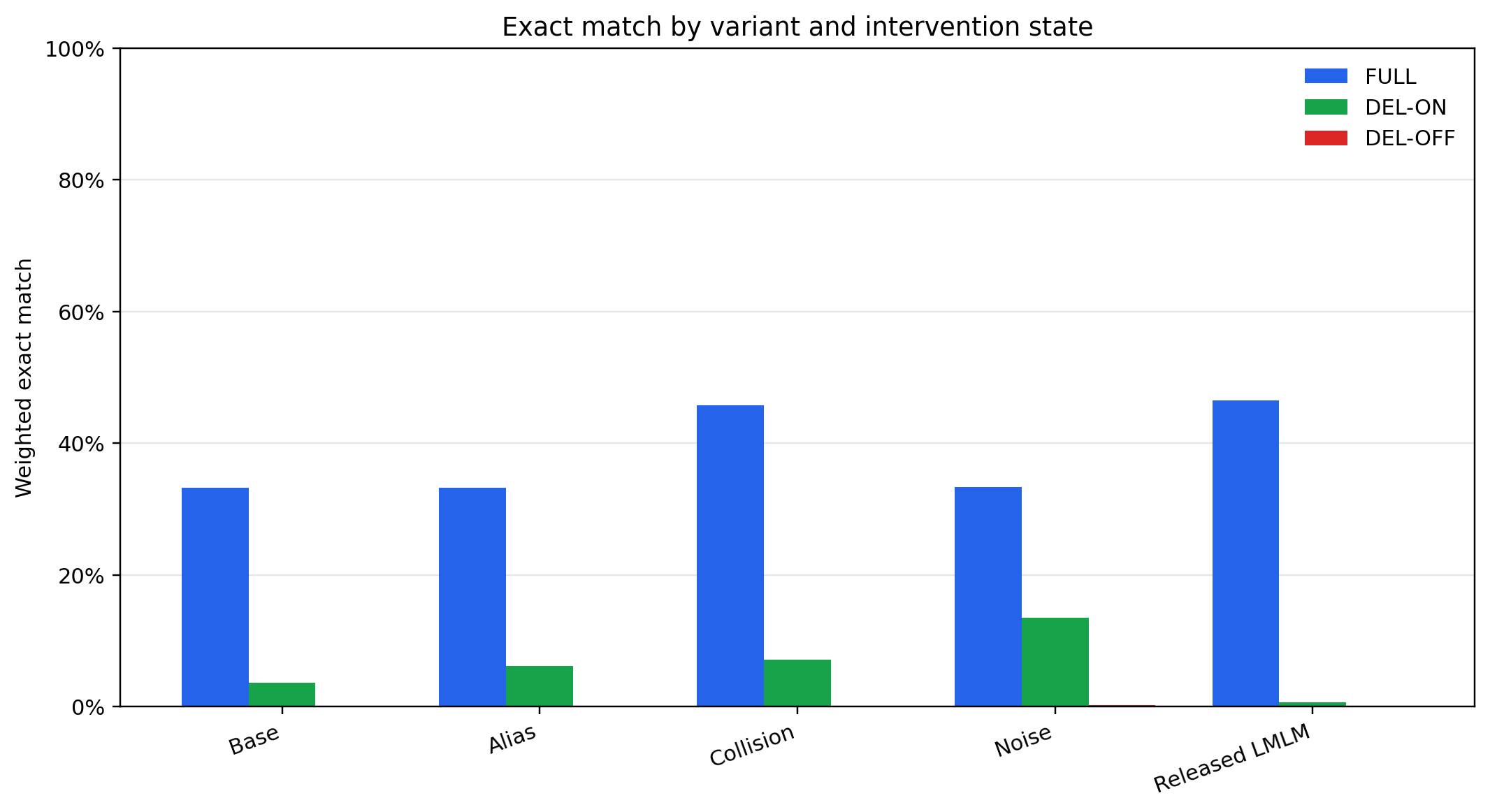}
    \caption{Weighted exact match by database variant and intervention state. \texttt{FULL} exact match is $\sim$33\% on \textit{Base}, \textit{Alias}, and \textit{Noise}, and $\sim$46\% on \textit{Collision} and the released LMLM database. \texttt{DEL-ON} exact match is $\sim$3\% on \textit{Base}, $\sim$6\% on \textit{Alias}, $\sim$7\% on \textit{Collision}, $\sim$14\% on \textit{Noise}, and $\sim$1\% on the released LMLM database. \texttt{DEL-OFF} exact match is near zero in every variant.}
    \label{fig:exact-match-variant-state}
\end{figure}

Tables~\ref{tab:standard_metrics_all_databases} and~\ref{tab:audit_metrics_all_databases} in Appendix~\ref{appex:numeric_results} report the full per-cell numbers underlying Figures~\ref{fig:del-on-attribution-variant}--\ref{fig:exact-match-variant-state}.

\section{Analysis}
\label{sec:analysis}

Sections~\ref{subsec:analysis-leakage} and~\ref{subsec:analysis-residual} establish the headline decomposition of post-deletion correctness into a parametric component and a retrieval-graph component. Sections~\ref{subsec:analysis-topology} and~\ref{subsec:analysis-prompt} then ablate the two axes of the audit grid that the framework holds fixed, the retrieval-graph topology and the prompt formulation, to identify which axis controls the surviving residual. Section~\ref{subsec:error_analysis} closes the analysis with qualitative examples of the three failure modes the decomposition predicts.

\subsection{Parametric externalization at the per-fact level}
\label{subsec:analysis-leakage}

Our headline finding is that parametric leakage $\hat{L}$ is near zero across the audit. The red $L(f)$ component contributes a vanishingly small share of the $L(f) + R(f)$ stack in every variant (Figure~\ref{fig:del-on-attribution-variant}) and in every prompt style (Figure~\ref{fig:del-on-attribution-prompt}), and weighted \texttt{DEL-OFF} performance is visually indistinguishable from zero in every cell of Figures~\ref{fig:token-f1-prompt-state} and~\ref{fig:exact-match-variant-state}. At the level of model parameters, the LMLM checkpoint behaves close to a model that had never seen the fact, in contrast to standard language models, which are known to memorize and expose training-data content even when each example is seen only a handful of times \cite{carlini2021extracting}. The original LMLM paper~\cite{zhao2025lmlm} provides only an aggregate version of this result. Their Table~9 shows that disabling the database globally drops FactScore \cite{min2023factscores} by roughly $19$ points, and their \S5 explicitly labels that finding as ``preliminary evidence'' that LMLM does not memorize. Our audit converts the same intuition into a per-fact statement: across $12{,}228$ alias-closure deletions, the parameters return the deleted answer at a rate of $\hat{L} = 0.11\%$. Our per-fact result is consistent with, and adds resolution to, the preliminary aggregate finding in the original LMLM paper \cite{zhao2025lmlm}.

\subsection{The residual lives in the retrieval graph}
\label{subsec:analysis-residual}

The residual that survives deletion is, in our audit, almost entirely attributable to the retrieval graph rather than to the parameters. The signature observation in Figures~\ref{fig:del-on-attribution-variant} and~\ref{fig:del-on-attribution-prompt} is the near-equality of the green $R(f)$ bar and the orange retrieval artifact bar: the two match within rounding for every variant and every prompt style. The following proposition shows that part of this equality is structural, which in turn clarifies what the figures empirically measure.

\begin{proposition}[Artifact and retrieval-mediated correctness coincide under complete deletion]
\label{prop:artifact-equals-R}
Fix a fact $f = (s, r, o)$ with $L(f) = 0$, and suppose the alias-closure deletion of $f$ is complete, i.e., no triplet alias-equivalent to $(s, r, o)$ survives in the database after deletion. Let $G(f)$ be the indicator that some retained candidate in the inference-time trace is gold-equivalent (alias-equivalent on subject, relation, and object), and let
\[
A(f) = \mathbb{I}\bigl[Y(f, \texttt{DEL-ON}) = o\bigr]\,\bigl(1 - G(f)\bigr)
\]
be the per-fact retrieval artifact indicator. Then $R(f) = A(f)$.
\end{proposition}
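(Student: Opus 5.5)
The argument is a direct unfolding of the indicator definitions, so I plan to reduce both sides to the same indicator in two steps.

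First, use $L(f)=0$. By the definition of parametric leakage, $L(f)=\mathbb{I}[Y(f,\texttt{DEL-OFF})=o]=0$, so $Y(f,\texttt{DEL-OFF})\neq o$ holds with certainty. The second conjunct in the definition of $R(f)$ is then always satisfied, and
\[
R(f)=\mathbb{I}\bigl[Y(f,\texttt{DEL-ON})=o\bigr].
\]

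Second, use completeness of deletion to show $G(f)=0$. The inference-time trace under \texttt{DEL-ON} contains only triplets retained in the post-deletion database, since retrieval reads the database in its current state. A retained candidate is gold-equivalent exactly when it is alias-equivalent to $(s,r,o)$ on subject, relation, and object, which is precisely the alias-equivalence used by the deletion set. Completeness says no such triplet survives, so no retained candidate can be gold-equivalent. Hence $G(f)=0$, and therefore
\[
A(f)=\mathbb{I}\bigl[Y(f,\texttt{DEL-ON})=o\bigr]\cdot 1=R(f).
\]

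The only step needing care is the second one. It requires that ``gold-equivalent'' in the definition of $G(f)$ and ``alias-equivalent'' in the deletion-completeness hypothesis refer to the same relation under the same alias mapping. It also requires that the trace candidates are a subset of the post-deletion database, with no cached or pre-deletion entries. I will state both explicitly as part of the setup, citing the verification procedure in Section~\ref{sec:experiment_setup}, which checks that no retained triplet is gold-equivalent to the target.

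Finally, I will remark that under these hypotheses the equality is structural. The empirical near-equality of the green and orange bars therefore measures mainly that $\hat L\approx 0$ and that deletion was verified complete, not an independent fact about retrieval.
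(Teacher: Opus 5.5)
Your proposal is correct and follows the paper's proof: $L(f)=0$ reduces $R(f)$ to $\mathbb{I}[Y(f,\texttt{DEL-ON})=o]$, and completeness of alias-closure deletion forces $G(f)=0$, so $A(f)=R(f)$. Your proof also states two assumptions that the paper's one-line completeness step uses without saying so: that ``gold-equivalent'' and the deletion's alias-equivalence are the same relation, and that trace candidates come only from the post-deletion database.
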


\begin{proof}
Since $L(f) = 0$, we have $Y(f, \texttt{DEL-OFF}) \neq o$, so $R(f) = \mathbb{I}[Y(f, \texttt{DEL-ON}) = o]$. By completeness of alias-closure deletion, no retained candidate is gold-equivalent, hence $G(f) = 0$. Therefore $A(f) = \mathbb{I}[Y(f, \texttt{DEL-ON}) = o] = R(f)$.
\end{proof}

Proposition~\ref{prop:artifact-equals-R} reframes what the two attribution figures actually measure. Given the per-fact $\hat{L} \approx 0$ established in Section~\ref{subsec:analysis-leakage}, the empirical near-equality $\hat{R} \approx \hat{A}$ is a confirmation that our alias-closure deletion procedure is complete in practice; the empirical content of the figures is therefore not the equality itself but the topology-dependent magnitude of the residual. Retrieval-mediated correctness almost never coincides with a gold-equivalent retained candidate. Instead, it coincides with near-neighbor candidates that share embedding-space similarity with the deleted entry. This decomposition is not present in the original LMLM paper. Their TOFU \cite{maini2024tofu} evaluation reports a single forget-quality $p$-value per unlearning step and therefore aggregates over both gold-equivalent retrieval and near-neighbor retrieval. The retrieval-trace introspection used here separates the two and shows that, on retrieval graphs containing paraphrastic decoys, the second mechanism dominates. A standard \texttt{FULL} versus \texttt{DEL-ON} comparison would have shown only that some deleted facts are still answered correctly; the audit shows that those facts are not remembered, they are reconstituted at retrieval time.

\subsection{Ablation: Retrieval-graph topology controls the residual}
\label{subsec:analysis-topology}

We ablate the retrieval-graph topology by holding the model, deletion procedure, and prompt distribution fixed and varying only the database variant across \textit{Base}, \textit{Alias}, \textit{Collision}, \textit{Noise}, and the released LMLM database; the dependent variable is the post-deletion residual $L(f) + R(f)$. Reading down Figure~\ref{fig:del-on-attribution-variant}, the residual rises from $3.7\%$ on \textit{Base}, to $6.2\%$ on \textit{Alias}, to $7.2\%$ on \textit{Collision}, and to $13.6\%$ on \textit{Noise}, before falling to $0.7\%$ on the released LMLM database. This ordering tracks the retrieval-graph topologies described in Section~\ref{sec:experiment_setup}: variants in which more surviving entries point to the gold object after alias-closure deletion produce larger residuals, with \textit{Noise} as the worst case by construction and \textit{Base} as the cleanest among the custom databases. The original paper~\cite{zhao2025lmlm} reports unlearning on a single retrieval graph, the annotated TOFU corpus \cite{maini2024tofu}, and on that graph achieves ``ideal forgetting'' with $p > 0.05$. By varying the retrieval graph along controlled topologies, we show that the same architecture and the same alias-closure deletion can produce a \texttt{DEL-ON} residual anywhere from $0.7\%$ to $13.6\%$. The published TOFU result therefore measures the model on a benign retrieval graph \cite{maini2024tofu, zhao2025lmlm}; it does not, by itself, characterize what happens when the graph contains paraphrastic decoys.

\subsection{Ablation: Prompt formulation moves baselines more than residuals}
\label{subsec:analysis-prompt}

We ablate the prompt formulation by holding the model, deletion procedure, and database distribution fixed and varying only the prompt set across the six formulations introduced in Section~\ref{sec:experiment_setup}; the dependent variables are the \texttt{FULL} baseline accuracy and the post-deletion residual under \texttt{DEL-ON}. Under \texttt{FULL}, prompt style produces a wide performance spread (token F1 between $\sim$17\% and $\sim$57\% in Figure~\ref{fig:token-f1-prompt-state}, exact match between $\sim$33\% and $\sim$46\% in Figure~\ref{fig:exact-match-variant-state}), reflecting how easily each prompt style elicits a structured lookup. Under \texttt{DEL-ON}, that spread collapses: token F1 compresses into a $7$--$10\%$ band, and the \texttt{FULL}-to-\texttt{DEL-ON} drop is dominated by the \texttt{FULL} baseline rather than by any property of the prompt itself. Continuations lose the most because they had the most to lose; few-shot prompts lose the least for the same reason. The original paper's TOFU and FactScore evaluations each use a fixed prompt template, so this kind of decomposition is not available there \cite{zhao2025lmlm, maini2024tofu, min2023factscores}. Our six prompt families show that, once the supporting evidence is removed, prompt formulation has limited independent control over how much of a deleted fact survives. What survives is determined almost entirely by the retrieval graph.

\subsection{Error Analysis}
\label{subsec:error_analysis}
The decomposition in Sections~\ref{subsec:analysis-leakage} 
and~\ref{subsec:analysis-residual} predicts three qualitatively 
distinct ways a deleted fact can still be answered correctly: 
parametric leakage, near-neighbor retrieval reconstruction 
on \textit{Noise}, and confusable-neighbor drift on \textit{Collision}. 
Table~\ref{tab:qualitative_examples} in 
Appendix~\ref{app:qualitative} shows one inference-time 
trace for each, with the \textit{Noise} and \textit{Collision} rows 
drawn from the corresponding custom-database files and the 
parametric-leakage row drawn illustratively from a cell that 
empirically contains leakage cases. The traces make the population-level 
rates concrete: in the \textit{Noise} case, alias-closure deletion 
removes the canonical triplet for the gold fact, but a paraphrastic 
decoy with a different subject phrasing survives and routes retrieval 
to the same object, so \texttt{DEL-ON} returns the gold answer with 
no gold-equivalent candidate in the trace; in the \textit{Collision} 
case, deletion is complete and no surviving candidate routes to the 
gold object, but a near-miss triplet sharing the subject pulls retrieval 
onto a confusable neighbor and \texttt{DEL-ON} returns a plausible 
but incorrect answer; and in the parametric-leakage case, retrieval 
returns \texttt{unknown} under \texttt{DEL-OFF} yet the model still 
produces the gold answer from its parameters, an outcome that the 
$\hat{L} = 0.11\%$ headline rate predicts to be rare and that 
appears, in the cells where it occurs, to concentrate on 
high-frequency entities.

\section{Conclusion}
\label{sec:conclusion}
The audit's contribution is not a single number but a decomposition: a retrieval-disabled control isolates parametric recall from retrieval-side correctness, and the retrieval trace separates correctness anchored in gold-equivalent evidence from correctness reconstituted by near-neighbor candidates. Across our grid, the parametric channel is essentially empty in our audit: the LMLM checkpoint almost never returns a deleted answer when retrieval is disabled, and the residual that survives is, in our experiments, primarily a property of the retrieval graph rather than of the model. Within the scope of our audit, LMLM deletion within the model appears to be a clean operation; outside the model, deletion is only as complete as the alias-and-paraphrase closure used to construct it, and our \textit{Noise} variant produces a $13.6\%$ residual at zero parametric leakage precisely by exposing this gap. Prompt formulation does not appear to move this number independently: once the supporting evidence has been removed, what predominantly determines how much of a deleted fact survives is the topology of the retrieval graph rather than the question's surface form.

\section{Limitations and Broader Impact}
\label{sec:limitations_and_broader_impact}

Our audit has several scope limitations. All experiments use a single LMLM checkpoint, the publicly released $382$M-parameter LLaMA2-style model, with retrieval threshold $0.6$, fuzzy-match top-$1$ fallback, and greedy decoding, so findings may not generalize across model sizes, retrieval architectures, similarity thresholds, or decoding strategies \cite{zhao2025lmlmcode}. The custom databases used to stress-test deletion are also small and topology-controlled, on the order of $100$--$240$ facts per (domain, variant) cell, which is what makes paired comparison possible but is several orders of magnitude smaller than the $54.6$M-triplet released LMLM database \cite{zhao2025lmlm}; the variant-level findings therefore describe how deletion behaves when scale is held fixed and topology is varied, rather than how scale itself interacts with topology. We restrict attention to entity-level atomic facts in English, evaluated under an automatic, alias-aware exact-match notion of correctness, so multi-hop or compositional reasoning, languages other than English, sampling-based decoding, and human-judged correctness all remain outside the present scope. Finally, our alias-closure deletion procedure removes only triplets that alias-match the target on subject, relation, and object simultaneously; the \textit{Noise} variant is designed to expose what survives this closure, so its residual is a feature of the audit rather than a methodological shortcoming, but it does mean that low parametric leakage should not be read as a guarantee that any user-facing notion of ``deletion'' fully removes access to sensitive information.

For applications such as data-deletion compliance or factual editing, the practical lesson from the audit is that the unlearning boundary in this class of LMLM is drawn primarily by the database administrator rather than by the model. Sound governance therefore requires a closure procedure that extends beyond entity aliases to retrieval-graph paraphrases of the object.

\section{Future Work}
\label{sec:future_work}

The most direct extension of this work is on the database side. Our results identify retrieval-graph topology as the dominant determinant of post-deletion residuals, which suggests building or preprocessing the database so that alias-closure deletion has fewer surviving routes. One concrete direction is an extended deletion closure that, in addition to alias-equivalent triplets, also removes any triplet whose retained-candidate embedding falls within a similarity radius of the canonical entry, so that paraphrastic decoys are caught at deletion time. A second direction is canonicalization at write time, in which aliases and paraphrastic forms are stored as pointers into a single canonical record rather than as independent triplets. Both approaches are directly testable within our framework: re-running the audit on the modified database and measuring whether $R(f)$ and the retrieval artifact rate fall below their current ranges would tell us how much of the residual is recoverable through database design alone.

Several axes of the audit itself also remain open. We hold the retriever (\textsc{all-MiniLM-L6-v2}) and the similarity threshold ($0.6$) fixed throughout, so sweeping the threshold and replacing the retriever with sparse, dense, or hybrid alternatives would clarify how much of the observed $13.6\%$ \textit{Noise} residual is a property of the embedding model rather than of the LMLM architecture. Scaling the audit to larger LMLM checkpoints and to the full $54.6$M-triplet released database would test whether the per-fact zero parametric leakage we report holds at production scale. Extending the framework beyond entity-level atomic facts to multi-hop and compositional knowledge, and beyond automatic exact-match scoring to semantic or human-judged correctness, would close the gap between the audit and end-user notions of forgetting. Finally, the same intervention set can be applied to retrieval-augmented and edited parametric models more broadly, which would allow a head-to-head comparison of which architectural family achieves the cleanest deletion under matched retrieval pressure.

\section*{Acknowledgements}
\label{sec:acknowledgements}

We thank Akshat Gupta (Ph.D. student, UC Berkeley) for ongoing research feedback and direction; Yilun Hua (Ph.D. student, Cornell University) for further research feedback and direction on the LMLM framework; and Marcel Roed (Ph.D. student, Stanford University) for early feedback on the project proposal. This work used computing resources provided by Berkeley Research Computing through the Compton Spectrometer and Imager (COSI) mission (NASA Small Explorers (SMEX) Program).

% In the unusual situation where you want a paper to appear in the
% references without citing it in the main text, use \nocite
\nocite{langley00}

\bibliography{example_paper}
\bibliographystyle{icml2025}

%%%%%%%%%%%%%%%%%%%%%%%%%%%%%%%%%%%%%%%%%%%%%%%%%%%%%%%%%%%%%%%%%%%%%%%%%%%%%%%
%%%%%%%%%%%%%%%%%%%%%%%%%%%%%%%%%%%%%%%%%%%%%%%%%%%%%%%%%%%%%%%%%%%%%%%%%%%%%%%
% APPENDIX
%%%%%%%%%%%%%%%%%%%%%%%%%%%%%%%%%%%%%%%%%%%%%%%%%%%%%%%%%%%%%%%%%%%%%%%%%%%%%%%
%%%%%%%%%%%%%%%%%%%%%%%%%%%%%%%%%%%%%%%%%%%%%%%%%%%%%%%%%%%%%%%%%%%%%%%%%%%%%%%
\newpage
\appendix
\onecolumn
\section{Architecture}
\label{appex:architecture}

%You can have as much text here as you want. The main body must be at most $8$ pages long.
%For the final version, one more page can be added.
%If you want, you can use an appendix like this one.  

%The $\mathtt{\backslash onecolumn}$ command above can be kept in place if you prefer a one-column appendix, or can be removed if you prefer a two-column appendix.  Apart from this possible change, the style (font size, spacing, margins, page numbering, etc.) should be kept the same as the main body.

This appendix details the two design axes that define the prompt $\times$ database grid used throughout the audit: the six prompt formulations applied to every target fact, and the four custom database topologies constructed to stress-test alias-closure deletion. Together these specifications fix the inputs to the $6 \times 3 \times 13$ evaluation grid summarized in Section \ref{sec:experiment_setup}.

\subsection{Prompt formulations}
\label{app:prompt_formulations}

Each target fact is presented to the LMLM under six prompt formulations, designed to vary surface form while holding the underlying fact constant. The formulations span direct question-answering, paraphrased rewordings, contextually framed prompts, cloze-style completions, free-form continuations, and few-shot demonstrations. Their purpose in the audit is to separate the effect of question phrasing on FULL baseline accuracy from any independent effect on the post-deletion residual; in our experiments, prompt formulation moves the FULL baseline considerably more than it moves the DEL-ON residual (Section \ref{subsec:analysis-prompt}). Table~\ref{tab:prompt_examples} illustrates each formulation using a single fact drawn from the released LMLM database.

\begin{longtable}{p{0.28\textwidth} p{0.68\textwidth}}
\caption{Example prompt formulations for the fact ``Geri Halliwell -- Famous For -- Spice Girls'' across the six prompt sets in the released LMLM database.}
\label{tab:prompt_examples} \\

\toprule
\textbf{Prompt Set} & \textbf{Example Prompt} \\
\midrule
\endfirsthead

\multicolumn{2}{l}{\small\emph{Table \thetable{} continued from previous page}} \\
\toprule
\textbf{Prompt Set} & \textbf{Example Prompt} \\
\midrule
\endhead

\midrule
\multicolumn{2}{r}{\small\emph{Continued on next page}} \\
\endfoot

\bottomrule
\endlastfoot

\textbf{Direct} &
What is Geri Halliwell famous for? \\[6pt]

\textbf{Paraphrased} &
Can you tell me what Geri Halliwell is famous for? \\[6pt]

\textbf{Contextual} &
Context: I am compiling a concise factual profile for Geri Halliwell. Answer with a short factual phrase. \newline
Question: What is Geri Halliwell famous for? \\[6pt]

\textbf{Cloze} &
Complete the sentence with the missing fact: Geri Halliwell is famous for \_\_\_\_. \\[6pt]

\textbf{Continuations} &
Tell me about Geri Halliwell. Geri Halliwell is famous for \\[6pt]

\textbf{Fewshot} &
Answer the final question with a short factual phrase. \newline
\newline
Question: Where was Ada Lovelace born? \newline
Answer: London \newline
\newline
Question: In what year was Pride and Prejudice published? \newline
Answer: 1813 \newline
\newline
Question: What is Geri Halliwell famous for? \newline
Answer: \\

\end{longtable}

\subsection{Custom database variants}

Alongside the released LMLM database, we constructed twelve custom databases spanning three domains (countries, politicians, sports) and four topological variants (Base, Alias, Noise, Collision). The variants are intended to isolate distinct mechanisms by which a fact could remain recoverable after canonical removal: \emph{Base} provides a clean baseline with no alternative routes; \emph{Alias} probes whether alias-closure deletion catches every surface realization of the subject and relation; \emph{Noise} probes whether retrieval can reconstitute the deleted answer through paraphrastic decoys that point to the same object; and \emph{Collision} probes whether retrieval drifts onto a near-neighbor sharing the subject but routing to a different object. In our audit, the post-deletion residual tracks this topology ordering, with the largest residual concentrated in the \emph{Noise} variant (Section \ref{subsec:analysis-topology}). Table~\ref{tab:custom_dataset_variants} illustrates the four variants using a single politician-domain example.

\begin{longtable}{p{0.16\textwidth} p{0.34\textwidth} p{0.46\textwidth}}
\caption{Example of the four custom dataset variants used in our experiments, illustrated with a single politician-domain example.}
\label{tab:custom_dataset_variants} \\

\toprule
\textbf{Variant} & \textbf{Description} & \textbf{Example} \\
\midrule
\endfirsthead

\multicolumn{3}{l}{\small\emph{Table \thetable{} continued from previous page}} \\
\toprule
\textbf{Variant} & \textbf{Description} & \textbf{Example} \\
\midrule
\endhead

\midrule
\multicolumn{3}{r}{\small\emph{Continued on next page}} \\
\endfoot

\bottomrule
\endlastfoot

\textbf{Base} &
Clean, straightforward facts with little ambiguity. &
Who is the 47th U.S. president? $\rightarrow$ Donald Trump \\[6pt]

\textbf{Alias} &
Same underlying fact, but expressed with alternate names or phrasings. &
Who is the current POTUS? $\rightarrow$ Donald Trump \newline
Who is the president of the USA? $\rightarrow$ Donald Trump \\[6pt]

\textbf{Noise} &
Many different facts point to the same answer, so that answer becomes very common and tempting for retrieval. &
Who is the 45th U.S. president? $\rightarrow$ Donald Trump \newline
Who is the 47th U.S. president? $\rightarrow$ Donald Trump \newline
Which New York property magnate became U.S. president? $\rightarrow$ Donald Trump \\[6pt]

\textbf{Collision} &
Near-miss facts that are very similar but should produce different answers, so retrieval can confuse neighbors. &
Who is the 46th U.S. president? $\rightarrow$ Joe Biden \newline
Who is the 47th U.S. president? $\rightarrow$ Donald Trump \\
\end{longtable}

\subsection{Qualitative examples of post-deletion failure modes}
\label{app:qualitative}
Table~\ref{tab:qualitative_examples} shows one representative
inference-time trace for each of the three failure modes identified by
the decomposition. For each example we report the gold fact, the
variant the trace was drawn from, the canonical triplet that was
removed by alias-closure deletion, the highest-similarity retrieval
candidate retained at inference time, and the model's \texttt{DEL-ON}
(or \texttt{DEL-OFF}, for the parametric-leakage case) output. The
deleted triplets and the surviving retrieval candidates in the
\textit{Noise} and \textit{Collision} rows are taken verbatim from the
custom databases under
\texttt{data/custom\_databases/countries/}. The parametric-leakage row
is drawn from the \texttt{direct\_questions} $\times$ \textit{Base}
(sports) cell, which contains $2$ leakage cases out of $100$ paired
facts. The specific triplet shown is illustrative of the kind of
high-frequency sports fact this cell contains rather than a verbatim
trace from the audit logs.
\begin{longtable}{p{0.13\textwidth} p{0.10\textwidth} p{0.22\textwidth} p{0.22\textwidth} p{0.20\textwidth}}
\caption{Representative inference-time traces for the three post-deletion failure modes. \textit{Noise} and \textit{Collision} rows reproduce real triplets from the corresponding custom-database files; the parametric-leakage row is drawn from a cell that empirically contains leakage cases (\texttt{direct\_questions} $\times$ \textit{Base} (sports), $2$ of $100$), but the specific triplet shown is illustrative rather than a verbatim trace.}
\label{tab:qualitative_examples} \\
\toprule
\textbf{Failure mode} & \textbf{Variant} & \textbf{Deleted triplet} & \textbf{Surviving retrieval candidate} & \textbf{Model output} \\
\midrule
\endfirsthead
\multicolumn{5}{l}{\small\emph{Table \thetable{} continued from previous page}} \\
\toprule
\textbf{Failure mode} & \textbf{Variant} & \textbf{Deleted triplet} & \textbf{Surviving retrieval candidate} & \textbf{Model output} \\
\midrule
\endhead
\midrule
\multicolumn{5}{r}{\small\emph{Continued on next page}} \\
\endfoot
\bottomrule
\endlastfoot
\textbf{Near-neighbor reconstruction} &
\textit{Noise} (countries) &
\emph{United States $\rightarrow$ Capital $\rightarrow$ Washington, D.C.} &
\emph{Government of United States $\rightarrow$ Seat of Government $\rightarrow$ Washington, D.C.} &
\texttt{DEL-ON}: ``Washington, D.C.'' (gold; no gold-equivalent candidate in trace) \\[6pt]
\textbf{Confusable-neighbor drift} &
\textit{Collision} (countries) &
\emph{United States $\rightarrow$ Capital $\rightarrow$ Washington, D.C.} &
\emph{United States $\rightarrow$ Largest City $\rightarrow$ New York City} &
\texttt{DEL-ON}: ``New York City'' (incorrect; plausible neighbor) \\[6pt]
\textbf{Parametric leakage} &
\textit{Base} (sports) &
\emph{Michael Jordan $\rightarrow$ Sport $\rightarrow$ Basketball} &
None (retrieval returns \texttt{unknown}) &
\texttt{DEL-OFF}: ``Basketball'' (illustrative of the $2$ leakage cases observed in this cell) \\
\end{longtable}

\section{Hyperparameters}
\label{appex:hyperparameters}

This section lists the configuration constants that directly affect the LMLM's outputs in our audit: the loaded checkpoint, the retrieval stack, and the decoding policy. Settings inherited from the upstream LMLM release \citep{zhao2025lmlmcode} are noted as such in the description column. Experimental-grid axes (prompt formulations, database variants, intervention states) and evaluation plumbing are described separately in Appendix~\ref{appex:architecture} and Section~\ref{sec:experiment_setup}. Table~\ref{tab:hyperparameters} lists each performance-relevant hyperparameter, the variable name in source, a short description, and the value used.

{\small
\begin{longtable}{p{0.20\textwidth} p{0.40\textwidth} p{0.34\textwidth}}
\caption{Performance-relevant hyperparameters used throughout the audit. Variable names refer to identifiers in \texttt{src/lmlm-audit/} unless otherwise specified.}
\label{tab:hyperparameters} \\

\toprule
\textbf{Hyperparameter} & \textbf{Description} & \textbf{Value} \\
\midrule
\endfirsthead

\multicolumn{3}{l}{\small\emph{Table \thetable{} continued from previous page}} \\
\toprule
\textbf{Hyperparameter} & \textbf{Description} & \textbf{Value} \\
\midrule
\endhead

\midrule
\multicolumn{3}{r}{\small\emph{Continued on next page}} \\
\endfoot

\bottomrule
\endlastfoot

\textbf{\texttt{model\_name}} &
LMLM checkpoint loaded by the audit's model loader; LLaMA2-style decoder-only model pre-trained from scratch by the upstream authors on annotated Wikipedia. &
\texttt{kilian-group/}\newline\texttt{LMLM-llama2-382M} \\[6pt]

\textbf{logit bias on db tokens} &
Decoding-time logit bias applied to the four lookup tokens so that lookup calls are issued reliably; values inherited from upstream. &
\texttt{<|db\_entity|>}: 4 \newline
\texttt{<|db\_relationship|>}: 2 \newline
\texttt{<|db\_return|>}: 2 \newline
\texttt{<|db\_end|>}: 2 \\[6pt]

\textbf{embedding model} &
Sentence-Transformer used to embed lookup queries and database triplets for top-$k$ FAISS retrieval; inherited from the upstream top-$k$ retriever. &
\texttt{sentence-transformers/}\newline\texttt{all-MiniLM-L6-v2} \\[6pt]

\textbf{\texttt{threshold}} &
Cosine similarity threshold used by the top-$k$ retriever; candidates with score below this are dropped, and retrieval returns \texttt{unknown} if no candidate clears it. &
0.6 \\[6pt]

\textbf{\texttt{fallback\_policy}} &
Behavior when no candidate clears \texttt{threshold}; \texttt{top1\_anyway} re-runs retrieval at threshold $-1.0$ and returns the highest-similarity candidate. &
\texttt{top1\_anyway} \\[6pt]

\textbf{\texttt{max\_new\_tokens}} &
Per-call cap on freshly generated answer tokens, used as the target answer length when computing the generation budget. &
12 \\[6pt]

\textbf{generation budget} &
Total token budget passed to \texttt{model.generate}, sized to leave slack for lookup markup before the retrieved value appears. &
$\max(32,\, |p| + 28)$ \\[6pt]

\textbf{\texttt{repetition\_penalty}} &
HuggingFace \texttt{generate} repetition penalty applied during decoding. &
1.2 \\[6pt]

\textbf{\texttt{do\_sample}} &
Whether to sample tokens during decoding; greedy decoding is used throughout the audit. &
\texttt{False} \\[6pt]

\textbf{\texttt{eos\_token\_id}} &
Stop tokens that terminate generation early so the audit can intercept retrieval at the correct point. &
\texttt{<|db\_return|>} \newline
\texttt{tokenizer.eos\_token\_id} \newline
\texttt{<|end\_of\_text|>} \\

\end{longtable}
}

\section{LMLM Audit Identities}
\label{appex:lmlm_identities}

This section collects the small set of equations that are specific to the
three-state LMLM audit. Each
identity links the per-fact indicators $L(f)$ and $R(f)$ from
Section~\ref{sec:method_setup} and the per-fact retrieval
artifact indicator $A(f)$ from
Proposition~\ref{prop:artifact-equals-R} to the paired contingency that
the McNemar test in Appendix~\ref{appex_subsec:mcnemar} consumes.
Throughout, $\mathcal{F}$ denotes the set of paired facts (those evaluated
under both \texttt{DEL-ON} and \texttt{DEL-OFF}), $n = |\mathcal{F}|$,
and the paired contingency is
\begin{align*}
a &= |\{f \in \mathcal{F} : Y(f,\texttt{DEL-ON}) = o \;\wedge\; Y(f,\texttt{DEL-OFF}) = o\}|,\\
b &= |\{f \in \mathcal{F} : Y(f,\texttt{DEL-ON}) = o \;\wedge\; Y(f,\texttt{DEL-OFF}) \neq o\}|,\\
c &= |\{f \in \mathcal{F} : Y(f,\texttt{DEL-ON}) \neq o \;\wedge\; Y(f,\texttt{DEL-OFF}) = o\}|,\\
d &= |\{f \in \mathcal{F} : Y(f,\texttt{DEL-ON}) \neq o \;\wedge\; Y(f,\texttt{DEL-OFF}) \neq o\}|.
\end{align*}
The intervention-condition variable $c$ from
Section~\ref{sec:method_setup} does not appear elsewhere in
this section, so we reuse the symbol for the McNemar-style off-diagonal
count without ambiguity.

\subsection{Decomposition of \texttt{DEL-ON} correctness}
\label{appex_subsec:identity_decomposition}

\begin{proposition}[\texttt{DEL-ON} correctness decomposition]
\label{prop:del_on_decomposition}
For every fact $f \in \mathcal{F}$,
\begin{equation}
\mathbb{I}[Y(f,\texttt{DEL-ON}) = o] \;=\; L(f) + R(f) \;-\; \mathbb{I}\bigl[Y(f,\texttt{DEL-ON}) \neq o \;\wedge\; Y(f,\texttt{DEL-OFF}) = o\bigr]. \label{eq:del_on_per_fact}
\end{equation}
Aggregating equation~(\ref{eq:del_on_per_fact}) over $\mathcal{F}$ and
writing $p_{\texttt{DEL-ON}}$ for the empirical \texttt{DEL-ON}
exact-match rate gives
\begin{equation}
p_{\texttt{DEL-ON}} \;=\; \hat{L} + \hat{R} - \frac{c}{n}. \label{eq:del_on_aggregate}
\end{equation}
\end{proposition}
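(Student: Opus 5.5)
The identity is a statement about indicator functions of two binary events, so the plan is a four-way case split followed by linear aggregation. Fix $f \in \mathcal{F}$ and abbreviate the two Boolean events $U = [Y(f,\texttt{DEL-ON}) = o]$ and $V = [Y(f,\texttt{DEL-OFF}) = o]$. By the definitions in Section~\ref{sec:method_setup}, $L(f) = \mathbb{I}[V]$ and $R(f) = \mathbb{I}[U \wedge \neg V]$. The correction term in equation~(\ref{eq:del_on_per_fact}) is $\mathbb{I}[\neg U \wedge V]$. Every quantity is then a function of the pair $(\mathbb{I}[U],\mathbb{I}[V]) \in \{0,1\}^2$.

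For the per-fact identity, I would first write $L(f)$ as a partition over $U$:
\[
\mathbb{I}[V] = \mathbb{I}[U \wedge V] + \mathbb{I}[\neg U \wedge V].
\]
Substituting this into the right-hand side makes the $\mathbb{I}[\neg U \wedge V]$ terms cancel, leaving
\[
\mathbb{I}[U \wedge V] + \mathbb{I}[U \wedge \neg V] = \mathbb{I}[U],
\]
which is exactly the left-hand side. As a sanity check, I would also run through the four cells $(U,V)$ directly:
\begin{itemize}
\item $(1,1)$ gives $1 = 1 + 0 - 0$;
\item $(1,0)$ gives $1 = 0 + 1 - 0$;
\item $(0,1)$ gives $0 = 1 + 0 - 1$;
\item $(0,0)$ gives $0 = 0$.
\end{itemize}

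For the aggregate identity~(\ref{eq:del_on_aggregate}), I would sum equation~(\ref{eq:del_on_per_fact}) over $\mathcal{F}$ and divide by $n$. The left-hand side becomes $p_{\texttt{DEL-ON}}$, by definition of the empirical \texttt{DEL-ON} exact-match rate restricted to paired facts. The first two terms on the right become $\hat{L}$ and $\hat{R}$ by their definitions as averages over $\mathcal{F}$. The last term counts the facts with $\neg U \wedge V$, which is precisely the contingency cell $c$. As a cross-check, the contingency table gives $\hat{L} = (a+c)/n$ and $\hat{R} = b/n$, so
\[
\hat{L} + \hat{R} - c/n = (a+b)/n = p_{\texttt{DEL-ON}}.
\]

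No step is a real obstacle. The one point needing care is bookkeeping: $p_{\texttt{DEL-ON}}$, $\hat{L}$ and $\hat{R}$ must all be averaged over the same paired set $\mathcal{F}$ with the same normalization $n$. The symbol $c$ must also be read as the off-diagonal count, not the intervention variable. I would state both conventions explicitly before summing.
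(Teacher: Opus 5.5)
Your proposal is correct and takes essentially the same route as the paper: it splits the pair $(Y(f,\texttt{DEL-ON}),Y(f,\texttt{DEL-OFF}))$ into the four contingency cells, writes $L(f)$ as the sum of the $a$- and $c$-cell indicators and $R(f)$ as the $b$-cell indicator, cancels the $c$-cell term, and then sums over $\mathcal{F}$ and divides by $n$. Your explicit note that $p_{\texttt{DEL-ON}}$ must be averaged over the same paired set $\mathcal{F}$ spells out a condition the paper leaves implicit.
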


\begin{proof}
The four indicators $\mathbb{I}[a\text{-cell}]$, $\mathbb{I}[b\text{-cell}]$,
$\mathbb{I}[c\text{-cell}]$, $\mathbb{I}[d\text{-cell}]$ partition the
joint event space of $(Y(f,\texttt{DEL-ON}), Y(f,\texttt{DEL-OFF}))$, so
for every $f$ exactly one of them is $1$. By construction
\begin{align*}
\mathbb{I}[Y(f,\texttt{DEL-ON}) = o] &= \mathbb{I}[a\text{-cell}] + \mathbb{I}[b\text{-cell}],\\
L(f) = \mathbb{I}[Y(f,\texttt{DEL-OFF}) = o] &= \mathbb{I}[a\text{-cell}] + \mathbb{I}[c\text{-cell}],\\
R(f) &= \mathbb{I}[b\text{-cell}].
\end{align*}
Adding the first two relations and subtracting $\mathbb{I}[c\text{-cell}]$
cancels the duplicated $\mathbb{I}[a\text{-cell}]$ and yields
equation~(\ref{eq:del_on_per_fact}). Summing
equation~(\ref{eq:del_on_per_fact}) over $\mathcal{F}$ and dividing by
$n$ gives equation~(\ref{eq:del_on_aggregate}), since
$\sum_{f \in \mathcal{F}} \mathbb{I}[c\text{-cell}] = c$.
\end{proof}

Proposition~\ref{prop:del_on_decomposition} explains why the
$\hat{L} + \hat{R}$ stack reported in
Figure~\ref{fig:del-on-attribution-variant} reads almost identically to
the \texttt{DEL-ON} exact-match column of
Figure~\ref{fig:exact-match-variant-state}: the gap is exactly $c/n$,
which never exceeds $4/600 \approx 0.7\%$ in any of our variants. The
proposition also pins down what $\hat{L} + \hat{R}$ would equal in a
hypothetical setting where retrieval frequently hurts correctness; in
that setting $c/n$ would no longer be negligible and the stack and the
\texttt{DEL-ON} bar would diverge by exactly that amount.

\subsection{Recovering the paired contingency from aggregated rates}
\label{appex_subsec:identity_contingency}

\begin{proposition}[Contingency recovery]
\label{prop:contingency_recovery}
Suppose every fact $f \in \mathcal{F}$ is evaluated under both
\texttt{DEL-ON} and \texttt{DEL-OFF}. Let $p_{\texttt{DEL-ON}}$ and
$p_{\texttt{DEL-OFF}} = \hat{L}$ denote the empirical \texttt{DEL-ON}
and \texttt{DEL-OFF} exact-match rates and $\hat{R}$ the empirical
retrieval-mediated correctness rate. Then the paired contingency is
\begin{equation}
b = n\hat{R}, \qquad a = n\,p_{\texttt{DEL-ON}} - n\hat{R}, \qquad c = n\,p_{\texttt{DEL-OFF}} - a, \qquad d = n - a - b - c. \label{eq:contingency_recovery}
\end{equation}
\end{proposition}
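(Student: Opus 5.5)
The plan is to express each aggregated rate as a count over the four cells of the paired contingency, and then solve the resulting linear system one unknown at a time. Because every fact in $\mathcal{F}$ is evaluated under both \texttt{DEL-ON} and \texttt{DEL-OFF}, the four cell indicators partition $\mathcal{F}$. This is the same observation used in the proof of Proposition~\ref{prop:del_on_decomposition}. It gives $a+b+c+d=n$. The per-fact relations established there are
\[
\mathbb{I}[Y(f,\texttt{DEL-ON})=o]=\mathbb{I}[a\text{-cell}]+\mathbb{I}[b\text{-cell}],\qquad
L(f)=\mathbb{I}[a\text{-cell}]+\mathbb{I}[c\text{-cell}],
\]
together with $R(f)=\mathbb{I}[b\text{-cell}]$. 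I will sum each of these over $\mathcal{F}$ and divide by $n$. This gives three aggregate equations:
\[
p_{\texttt{DEL-ON}}=\frac{a+b}{n},\qquad
p_{\texttt{DEL-OFF}}=\hat{L}=\frac{a+c}{n},\qquad
\hat{R}=\frac{b}{n}.
\]

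Next I solve the system by back-substitution, in the order the statement lists the cells. The third equation gives $b=n\hat{R}$ directly. Substituting this into the first gives $a=n\,p_{\texttt{DEL-ON}}-n\hat{R}$. The second equation then gives $c=n\,p_{\texttt{DEL-OFF}}-a$. Finally, the partition identity gives $d=n-a-b-c$. Each step uses only quantities already determined, so the recovery is unique. This is expected, since the system is triangular in $(b,a,c,d)$.

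The only step needing care is the hypothesis that every fact is evaluated under both states, which I regard as the main obstacle. It is what makes the four cells an exact partition of a common denominator $n$. It is also what justifies identifying $p_{\texttt{DEL-OFF}}$ with $\hat{L}$ computed over the same set $\mathcal{F}$. If the \texttt{DEL-ON} and \texttt{DEL-OFF} rates were averaged over different fact sets, the sums would not share the normalizer $n$ and the substitution would fail. I will therefore state explicitly that all three rates are averages over the same paired set $\mathcal{F}$. This is exactly how $\hat{L}$ and $\hat{R}$ were defined in Section~\ref{sec:method_setup}. With that in place, the rest is routine arithmetic.
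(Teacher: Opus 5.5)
Your proposal is correct and follows essentially the same route as the paper. Both write $\hat{R}=b/n$, $p_{\texttt{DEL-ON}}=(a+b)/n$, $p_{\texttt{DEL-OFF}}=\hat{L}=(a+c)/n$ and $a+b+c+d=n$, then solve for $b$, $a$, $c$, $d$ in that order by back-substitution.
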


\begin{proof}
By definition $\hat{R} = (1/n)\sum_{f \in \mathcal{F}} R(f) = b/n$, so
$b = n\hat{R}$. The aggregate \texttt{DEL-ON} exact-match rate is
$p_{\texttt{DEL-ON}} = (a + b)/n$, giving
$a = n\,p_{\texttt{DEL-ON}} - b = n\,p_{\texttt{DEL-ON}} - n\hat{R}$.
The aggregate \texttt{DEL-OFF} exact-match rate is
$p_{\texttt{DEL-OFF}} = \hat{L} = (a + c)/n$, giving
$c = n\,p_{\texttt{DEL-OFF}} - a$. Since $\{a, b, c, d\}$ partition
$\mathcal{F}$, $a + b + c + d = n$ and so $d = n - a - b - c$.
\end{proof}

Proposition~\ref{prop:contingency_recovery} is what allows the
aggregated cross-state and per-state metrics produced by the audit to
be re-expanded into the $(a, b, c, d)$ table that the McNemar statistic
in Appendix~\ref{appex_subsec:mcnemar} consumes, without re-running the
audit on a per-fact basis. The recovery is exact under the same
coverage condition that makes the McNemar test well defined, namely
that every paired fact has been evaluated under both intervention
states.

\section{Additional Results}
\label{appex:additional_results}

\subsection{Paired McNemar test for DEL-ON vs.\ DEL-OFF}
\label{appex_subsec:mcnemar}

Figure~\ref{fig:mcnemar-by-variant} reports the discordant-pair counts that drive the paired McNemar test for \texttt{DEL-ON} against \texttt{DEL-OFF}, broken out by database variant. Both states share the same alias-closure deletion of the target fact; the only difference is that \texttt{DEL-ON} leaves retrieval enabled while \texttt{DEL-OFF} disables it. The count $b$ measures facts that retrieval rescues after the canonical entry is removed, and $c$ measures the reverse transition in which disabling retrieval somehow recovers an otherwise-incorrect answer. The concordant counts $a$ and $d$ are absorbed into the marginal $n$ and do not enter the test.

\begin{figure}[H]
  \centering
  \includegraphics[width=1\linewidth]{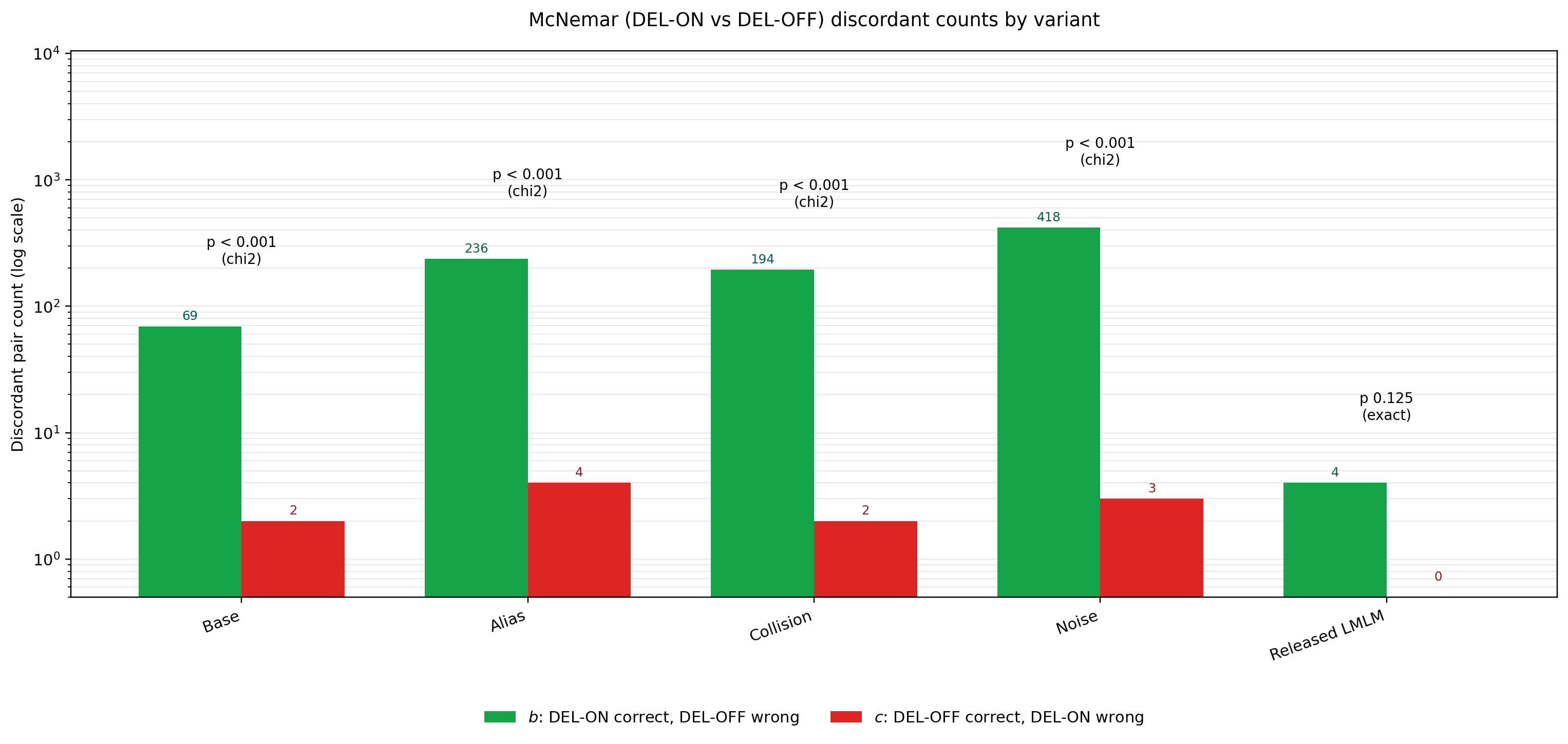}
  \caption{Discordant-pair counts driving the paired McNemar test for \texttt{DEL-ON} against \texttt{DEL-OFF}, broken out by database variant. Bar height is on a log scale so the much smaller $c$ counts remain legible alongside $b$. Annotations above each variant give the $p$-value of the recommended test ($\chi^2$ with continuity correction when $b + c > 25$, exact binomial otherwise).}
  \label{fig:mcnemar-by-variant}
\end{figure}

Across the four custom variants the asymmetry is unambiguous: $b$ ranges from $69$ on \textit{Base} to $418$ on \textit{Noise} while $c$ never exceeds $4$, and all four reject the null at $p < 0.001$ with $\chi^2$ statistics between $61$ and $407$. The Released LMLM column points the same way ($b = 4$, $c = 0$ on $n = 600$), but the discordant counts are too small for the exact binomial test to reject at $\alpha = 0.05$ ($p = 0.125$); we read this as a power limit rather than a contradiction, since the released-database evaluation has both fewer paired facts and a lower base rate of exact-match correctness.

Taken together with the headline rates in the main paper, the McNemar evidence reinforces the claim that retrieval-mediated correctness is a real paired effect rather than a coincidence at the population level. The relation $b \gg c$ holds inside every custom variant, which leaves little room for a reverse hypothesis in which gains from retrieval are merely noise of equal magnitude flipping in both directions.

%%%%%%%%%%

\section{Numeric Results}
\label{appex:numeric_results}

\begin{longtable}{llccccccccc}
\caption{Standard metrics across databases, prompt sets, and inference-time database states. Values are reported as proportions.}
\label{tab:standard_metrics_all_databases} \\

\toprule
& & \multicolumn{3}{c}{\texttt{FULL}} & \multicolumn{3}{c}{\texttt{DEL-ON}} & \multicolumn{3}{c}{\texttt{DEL-OFF}} \\
\cmidrule(lr){3-5} \cmidrule(lr){6-8} \cmidrule(lr){9-11}
Database & Prompt Set & $P$ & $R$ & $F1$ & $P$ & $R$ & $F1$ & $P$ & $R$ & $F1$ \\
\midrule
\endfirsthead

\multicolumn{11}{l}{\small\emph{Table \thetable{} continued from previous page}} \\
\toprule
& & \multicolumn{3}{c}{\texttt{FULL}} & \multicolumn{3}{c}{\texttt{DEL-ON}} & \multicolumn{3}{c}{\texttt{DEL-OFF}} \\
\cmidrule(lr){3-5} \cmidrule(lr){6-8} \cmidrule(lr){9-11}
Database & Prompt Set & $P$ & $R$ & $F1$ & $P$ & $R$ & $F1$ & $P$ & $R$ & $F1$ \\
\midrule
\endhead

\midrule
\multicolumn{11}{r}{\small\emph{Continued on next page}} \\
\endfoot

\bottomrule
\endlastfoot

Released LMLM
& Direct Questions      & 0.474 & 0.477 & 0.475 & 0.022 & 0.022 & 0.021 & 0.015 & 0.024 & 0.016 \\
Released LMLM
& Paraphrased Questions & 0.391 & 0.386 & 0.388 & 0.016 & 0.020 & 0.015 & 0.012 & 0.008 & 0.008 \\
Released LMLM
& Contextual Questions  & 0.594 & 0.597 & 0.594 & 0.032 & 0.039 & 0.031 & 0.008 & 0.031 & 0.011 \\
Released LMLM
& Cloze Prompts         & 0.336 & 0.331 & 0.332 & 0.028 & 0.031 & 0.027 & 0.006 & 0.014 & 0.007 \\
Released LMLM
& Continuations         & 0.815 & 0.819 & 0.816 & 0.026 & 0.034 & 0.026 & 0.028 & 0.016 & 0.013 \\
Released LMLM
& Fewshot               & 0.255 & 0.251 & 0.251 & 0.021 & 0.019 & 0.019 & 0.015 & 0.047 & 0.020 \\
\midrule

\input{tables/per_state_table_rows.tex}

\end{longtable}

\begin{longtable}{llcccc}
\caption{Cross-state audit metrics across databases and prompt sets. Values are reported as proportions.}
\label{tab:audit_metrics_all_databases} \\

\toprule
Database & Prompt Set & Paired Count & $L(f)$ & $R(f)$ & Retrieval Artifact Rate \\
\midrule
\endfirsthead

\multicolumn{6}{l}{\small\emph{Table \thetable{} continued from previous page}} \\
\toprule
Database & Prompt Set & Paired Count & $L(f)$ & $R(f)$ & Retrieval Artifact Rate \\
\midrule
\endhead

\midrule
\multicolumn{6}{r}{\small\emph{Continued on next page}} \\
\endfoot

\bottomrule
\endlastfoot

Released LMLM
& Direct Questions      & 100 & 0.000 & 0.010 & 0.010 \\
Released LMLM
& Paraphrased Questions & 100 & 0.000 & 0.000 & 0.000 \\
Released LMLM
& Contextual Questions  & 100 & 0.000 & 0.010 & 0.010 \\
Released LMLM
& Cloze Prompts         & 100 & 0.000 & 0.010 & 0.010 \\
Released LMLM
& Continuations         & 100 & 0.000 & 0.000 & 0.000 \\
Released LMLM
& Fewshot               & 100 & 0.000 & 0.010 & 0.010 \\
\midrule

\input{tables/cross_state_table_rows.tex}

\end{longtable}

%%%%%%%%%%%%%%%%%%%%%%%%%%%%%%%%%%%%%%%%%%%%%%%%%%%%%%%%%%%%%%%%%%%%%%%%%%%%%%%
%%%%%%%%%%%%%%%%%%%%%%%%%%%%%%%%%%%%%%%%%%%%%%%%%%%%%%%%%%%%%%%%%%%%%%%%%%%%%%%

\end{document}

%% file: tables/per_state_table_rows.tex
Countries Base
& Direct Questions       & 0.580 & 0.580 & 0.580 & 0.098 & 0.098 & 0.098 & 0.012 & 0.030 & 0.017 \\
Countries Base
& Paraphrased Questions  & 0.545 & 0.545 & 0.545 & 0.083 & 0.083 & 0.083 & 0.011 & 0.030 & 0.016 \\
Countries Base
& Contextual Questions   & 0.670 & 0.670 & 0.670 & 0.058 & 0.058 & 0.058 & 0.012 & 0.060 & 0.017 \\
Countries Base
& Cloze Prompts          & 0.311 & 0.315 & 0.311 & 0.022 & 0.027 & 0.023 & 0.004 & 0.040 & 0.007 \\
Countries Base
& Continuations          & 0.840 & 0.840 & 0.840 & 0.028 & 0.028 & 0.028 & 0.042 & 0.180 & 0.067 \\
Countries Base
& Fewshot                & 0.230 & 0.230 & 0.230 & 0.090 & 0.090 & 0.090 & 0.001 & 0.020 & 0.003 \\
\midrule
Countries Alias
& Direct Questions       & 0.757 & 0.757 & 0.757 & 0.217 & 0.217 & 0.217 & 0.006 & 0.043 & 0.011 \\
Countries Alias
& Paraphrased Questions  & 0.566 & 0.566 & 0.566 & 0.152 & 0.149 & 0.150 & 0.008 & 0.056 & 0.014 \\
Countries Alias
& Contextual Questions   & 0.603 & 0.608 & 0.604 & 0.124 & 0.129 & 0.126 & 0.004 & 0.055 & 0.008 \\
Countries Alias
& Cloze Prompts          & 0.511 & 0.516 & 0.512 & 0.113 & 0.115 & 0.112 & 0.004 & 0.028 & 0.007 \\
Countries Alias
& Continuations          & 0.858 & 0.858 & 0.858 & 0.202 & 0.202 & 0.202 & 0.014 & 0.058 & 0.020 \\
Countries Alias
& Fewshot                & 0.253 & 0.253 & 0.253 & 0.092 & 0.092 & 0.092 & 0.010 & 0.045 & 0.015 \\
\midrule
Countries Noise
& Direct Questions       & 0.631 & 0.631 & 0.631 & 0.467 & 0.467 & 0.467 & 0.007 & 0.022 & 0.011 \\
Countries Noise
& Paraphrased Questions  & 0.569 & 0.569 & 0.569 & 0.383 & 0.383 & 0.383 & 0.006 & 0.022 & 0.009 \\
Countries Noise
& Contextual Questions   & 0.670 & 0.670 & 0.670 & 0.389 & 0.389 & 0.389 & 0.009 & 0.050 & 0.014 \\
Countries Noise
& Cloze Prompts          & 0.438 & 0.443 & 0.439 & 0.327 & 0.331 & 0.328 & 0.006 & 0.043 & 0.010 \\
Countries Noise
& Continuations          & 0.878 & 0.878 & 0.878 & 0.354 & 0.354 & 0.354 & 0.025 & 0.128 & 0.041 \\
Countries Noise
& Fewshot                & 0.309 & 0.309 & 0.309 & 0.241 & 0.241 & 0.241 & 0.013 & 0.041 & 0.014 \\
\midrule
Countries Collision
& Direct Questions       & 0.668 & 0.671 & 0.669 & 0.192 & 0.194 & 0.192 & 0.012 & 0.050 & 0.019 \\
Countries Collision
& Paraphrased Questions  & 0.545 & 0.561 & 0.549 & 0.179 & 0.198 & 0.184 & 0.013 & 0.061 & 0.021 \\
Countries Collision
& Contextual Questions   & 0.700 & 0.707 & 0.701 & 0.131 & 0.137 & 0.131 & 0.013 & 0.081 & 0.019 \\
Countries Collision
& Cloze Prompts          & 0.436 & 0.439 & 0.436 & 0.127 & 0.130 & 0.127 & 0.006 & 0.071 & 0.012 \\
Countries Collision
& Continuations          & 0.879 & 0.879 & 0.879 & 0.174 & 0.173 & 0.173 & 0.033 & 0.152 & 0.052 \\
Countries Collision
& Fewshot                & 0.182 & 0.182 & 0.182 & 0.086 & 0.086 & 0.086 & 0.004 & 0.033 & 0.007 \\
\midrule
Politicians Base
& Direct Questions       & 0.504 & 0.504 & 0.504 & 0.033 & 0.033 & 0.033 & 0.000 & 0.000 & 0.000 \\
Politicians Base
& Paraphrased Questions  & 0.496 & 0.493 & 0.493 & 0.121 & 0.121 & 0.119 & 0.005 & 0.025 & 0.008 \\
Politicians Base
& Contextual Questions   & 0.475 & 0.467 & 0.470 & 0.079 & 0.078 & 0.078 & 0.003 & 0.025 & 0.005 \\
Politicians Base
& Cloze Prompts          & 0.283 & 0.279 & 0.281 & 0.042 & 0.042 & 0.040 & 0.010 & 0.060 & 0.016 \\
Politicians Base
& Continuations          & 0.683 & 0.681 & 0.682 & 0.033 & 0.033 & 0.033 & 0.006 & 0.017 & 0.008 \\
Politicians Base
& Fewshot                & 0.304 & 0.304 & 0.302 & 0.104 & 0.101 & 0.100 & 0.007 & 0.037 & 0.011 \\
\midrule
Politicians Alias
& Direct Questions       & 0.387 & 0.385 & 0.386 & 0.053 & 0.054 & 0.054 & 0.003 & 0.015 & 0.004 \\
Politicians Alias
& Paraphrased Questions  & 0.300 & 0.300 & 0.300 & 0.050 & 0.052 & 0.051 & 0.002 & 0.013 & 0.003 \\
Politicians Alias
& Contextual Questions   & 0.565 & 0.567 & 0.565 & 0.075 & 0.078 & 0.075 & 0.002 & 0.022 & 0.004 \\
Politicians Alias
& Cloze Prompts          & 0.256 & 0.254 & 0.255 & 0.054 & 0.048 & 0.050 & 0.001 & 0.013 & 0.002 \\
Politicians Alias
& Continuations          & 0.729 & 0.726 & 0.727 & 0.087 & 0.083 & 0.084 & 0.002 & 0.008 & 0.003 \\
Politicians Alias
& Fewshot                & 0.200 & 0.199 & 0.199 & 0.079 & 0.076 & 0.077 & 0.013 & 0.028 & 0.015 \\
\midrule
Politicians Noise
& Direct Questions       & 0.000 & 0.000 & 0.000 & 0.000 & 0.000 & 0.000 & 0.015 & 0.022 & 0.016 \\
Politicians Noise
& Paraphrased Questions  & 0.000 & 0.003 & 0.001 & 0.000 & 0.003 & 0.001 & 0.008 & 0.036 & 0.013 \\
Politicians Noise
& Contextual Questions   & 0.000 & 0.000 & 0.000 & 0.000 & 0.000 & 0.000 & 0.009 & 0.031 & 0.010 \\
Politicians Noise
& Cloze Prompts          & 0.000 & 0.000 & 0.000 & 0.000 & 0.000 & 0.000 & 0.010 & 0.068 & 0.016 \\
Politicians Noise
& Continuations          & 0.000 & 0.000 & 0.000 & 0.000 & 0.000 & 0.000 & 0.004 & 0.011 & 0.006 \\
Politicians Noise
& Fewshot                & 0.000 & 0.003 & 0.000 & 0.000 & 0.003 & 0.000 & 0.005 & 0.028 & 0.008 \\
\midrule
Politicians Collision
& Direct Questions       & 0.444 & 0.443 & 0.443 & 0.059 & 0.058 & 0.059 & 0.000 & 0.000 & 0.000 \\
Politicians Collision
& Paraphrased Questions  & 0.381 & 0.379 & 0.379 & 0.119 & 0.119 & 0.118 & 0.004 & 0.019 & 0.006 \\
Politicians Collision
& Contextual Questions   & 0.503 & 0.493 & 0.497 & 0.103 & 0.101 & 0.102 & 0.002 & 0.019 & 0.004 \\
Politicians Collision
& Cloze Prompts          & 0.256 & 0.256 & 0.256 & 0.025 & 0.025 & 0.025 & 0.008 & 0.045 & 0.012 \\
Politicians Collision
& Continuations          & 0.600 & 0.598 & 0.599 & 0.075 & 0.075 & 0.075 & 0.005 & 0.019 & 0.008 \\
Politicians Collision
& Fewshot                & 0.278 & 0.276 & 0.275 & 0.075 & 0.072 & 0.071 & 0.005 & 0.028 & 0.008 \\
\midrule
Sports Base
& Direct Questions       & 0.010 & 0.010 & 0.010 & 0.010 & 0.010 & 0.010 & 0.047 & 0.095 & 0.058 \\
Sports Base
& Paraphrased Questions  & 0.010 & 0.010 & 0.010 & 0.010 & 0.010 & 0.010 & 0.005 & 0.020 & 0.008 \\
Sports Base
& Contextual Questions   & 0.010 & 0.010 & 0.010 & 0.010 & 0.010 & 0.010 & 0.004 & 0.048 & 0.006 \\
Sports Base
& Cloze Prompts          & 0.010 & 0.010 & 0.010 & 0.010 & 0.010 & 0.010 & 0.005 & 0.033 & 0.008 \\
Sports Base
& Continuations          & 0.010 & 0.010 & 0.010 & 0.010 & 0.010 & 0.010 & 0.013 & 0.055 & 0.020 \\
Sports Base
& Fewshot                & 0.000 & 0.005 & 0.001 & 0.000 & 0.005 & 0.001 & 0.002 & 0.018 & 0.003 \\
\midrule
Sports Alias
& Direct Questions       & 0.000 & 0.000 & 0.000 & 0.000 & 0.000 & 0.000 & 0.009 & 0.024 & 0.013 \\
Sports Alias
& Paraphrased Questions  & 0.000 & 0.000 & 0.000 & 0.000 & 0.000 & 0.000 & 0.008 & 0.028 & 0.012 \\
Sports Alias
& Contextual Questions   & 0.000 & 0.000 & 0.000 & 0.000 & 0.000 & 0.000 & 0.004 & 0.028 & 0.006 \\
Sports Alias
& Cloze Prompts          & 0.000 & 0.000 & 0.000 & 0.000 & 0.000 & 0.000 & 0.006 & 0.025 & 0.008 \\
Sports Alias
& Continuations          & 0.000 & 0.000 & 0.000 & 0.000 & 0.000 & 0.000 & 0.008 & 0.033 & 0.012 \\
Sports Alias
& Fewshot                & 0.000 & 0.005 & 0.001 & 0.000 & 0.005 & 0.001 & 0.008 & 0.018 & 0.009 \\
\midrule
Sports Noise
& Direct Questions       & 0.475 & 0.466 & 0.468 & 0.072 & 0.061 & 0.063 & 0.041 & 0.092 & 0.053 \\
Sports Noise
& Paraphrased Questions  & 0.405 & 0.406 & 0.406 & 0.054 & 0.050 & 0.050 & 0.006 & 0.028 & 0.009 \\
Sports Noise
& Contextual Questions   & 0.569 & 0.569 & 0.569 & 0.044 & 0.041 & 0.042 & 0.006 & 0.055 & 0.010 \\
Sports Noise
& Cloze Prompts          & 0.284 & 0.274 & 0.276 & 0.053 & 0.043 & 0.045 & 0.004 & 0.027 & 0.007 \\
Sports Noise
& Continuations          & 0.716 & 0.707 & 0.709 & 0.075 & 0.060 & 0.063 & 0.009 & 0.041 & 0.014 \\
Sports Noise
& Fewshot                & 0.196 & 0.195 & 0.193 & 0.058 & 0.058 & 0.055 & 0.006 & 0.041 & 0.010 \\
\midrule
Sports Collision
& Direct Questions       & 0.479 & 0.481 & 0.480 & 0.036 & 0.032 & 0.032 & 0.034 & 0.074 & 0.043 \\
Sports Collision
& Paraphrased Questions  & 0.404 & 0.405 & 0.404 & 0.042 & 0.043 & 0.042 & 0.003 & 0.013 & 0.005 \\
Sports Collision
& Contextual Questions   & 0.544 & 0.540 & 0.541 & 0.045 & 0.040 & 0.039 & 0.002 & 0.032 & 0.005 \\
Sports Collision
& Cloze Prompts          & 0.278 & 0.279 & 0.277 & 0.050 & 0.049 & 0.048 & 0.004 & 0.025 & 0.007 \\
Sports Collision
& Continuations          & 0.696 & 0.696 & 0.696 & 0.054 & 0.046 & 0.049 & 0.009 & 0.038 & 0.013 \\
Sports Collision
& Fewshot                & 0.198 & 0.207 & 0.200 & 0.068 & 0.078 & 0.070 & 0.011 & 0.028 & 0.013 \\

%% file: tables/cross_state_table_rows.tex
Countries Base
& Direct Questions       & 100 & 0.000 & 0.080 & 0.080 \\
Countries Base
& Paraphrased Questions  & 100 & 0.000 & 0.060 & 0.060 \\
Countries Base
& Contextual Questions   & 100 & 0.000 & 0.040 & 0.040 \\
Countries Base
& Cloze Prompts          & 100 & 0.000 & 0.010 & 0.010 \\
Countries Base
& Continuations          & 100 & 0.000 & 0.010 & 0.010 \\
Countries Base
& Fewshot                & 100 & 0.000 & 0.090 & 0.090 \\
\midrule
Countries Alias
& Direct Questions       & 200 & 0.000 & 0.200 & 0.200 \\
Countries Alias
& Paraphrased Questions  & 200 & 0.000 & 0.135 & 0.135 \\
Countries Alias
& Contextual Questions   & 200 & 0.000 & 0.110 & 0.110 \\
Countries Alias
& Cloze Prompts          & 200 & 0.000 & 0.090 & 0.090 \\
Countries Alias
& Continuations          & 200 & 0.005 & 0.185 & 0.185 \\
Countries Alias
& Fewshot                & 200 & 0.000 & 0.090 & 0.090 \\
\midrule
Countries Noise
& Direct Questions       & 180 & 0.000 & 0.456 & 0.456 \\
Countries Noise
& Paraphrased Questions  & 180 & 0.000 & 0.367 & 0.367 \\
Countries Noise
& Contextual Questions   & 180 & 0.000 & 0.378 & 0.378 \\
Countries Noise
& Cloze Prompts          & 180 & 0.000 & 0.322 & 0.322 \\
Countries Noise
& Continuations          & 180 & 0.000 & 0.339 & 0.339 \\
Countries Noise
& Fewshot                & 180 & 0.011 & 0.233 & 0.233 \\
\midrule
Countries Collision
& Direct Questions       & 140 & 0.000 & 0.164 & 0.164 \\
Countries Collision
& Paraphrased Questions  & 140 & 0.000 & 0.143 & 0.143 \\
Countries Collision
& Contextual Questions   & 140 & 0.000 & 0.107 & 0.107 \\
Countries Collision
& Cloze Prompts          & 140 & 0.000 & 0.114 & 0.114 \\
Countries Collision
& Continuations          & 140 & 0.000 & 0.150 & 0.150 \\
Countries Collision
& Fewshot                & 140 & 0.000 & 0.079 & 0.079 \\
\midrule
Politicians Base
& Direct Questions       & 120 & 0.000 & 0.025 & 0.025 \\
Politicians Base
& Paraphrased Questions  & 120 & 0.000 & 0.100 & 0.100 \\
Politicians Base
& Contextual Questions   & 120 & 0.000 & 0.050 & 0.050 \\
Politicians Base
& Cloze Prompts          & 120 & 0.000 & 0.017 & 0.017 \\
Politicians Base
& Continuations          & 120 & 0.000 & 0.033 & 0.033 \\
Politicians Base
& Fewshot                & 120 & 0.000 & 0.067 & 0.067 \\
\midrule
Politicians Alias
& Direct Questions       & 240 & 0.000 & 0.046 & 0.046 \\
Politicians Alias
& Paraphrased Questions  & 240 & 0.000 & 0.042 & 0.042 \\
Politicians Alias
& Contextual Questions   & 240 & 0.000 & 0.067 & 0.067 \\
Politicians Alias
& Cloze Prompts          & 240 & 0.000 & 0.033 & 0.033 \\
Politicians Alias
& Continuations          & 240 & 0.000 & 0.075 & 0.075 \\
Politicians Alias
& Fewshot                & 240 & 0.008 & 0.046 & 0.046 \\
\midrule
Politicians Noise
& Direct Questions       & 180 & 0.006 & 0.000 & 0.000 \\
Politicians Noise
& Paraphrased Questions  & 180 & 0.000 & 0.000 & 0.000 \\
Politicians Noise
& Contextual Questions   & 180 & 0.000 & 0.000 & 0.000 \\
Politicians Noise
& Cloze Prompts          & 180 & 0.000 & 0.000 & 0.000 \\
Politicians Noise
& Continuations          & 180 & 0.000 & 0.000 & 0.000 \\
Politicians Noise
& Fewshot                & 180 & 0.000 & 0.000 & 0.000 \\
\midrule
Politicians Collision
& Direct Questions       & 160 & 0.000 & 0.050 & 0.050 \\
Politicians Collision
& Paraphrased Questions  & 160 & 0.000 & 0.100 & 0.100 \\
Politicians Collision
& Contextual Questions   & 160 & 0.000 & 0.081 & 0.081 \\
Politicians Collision
& Cloze Prompts          & 160 & 0.000 & 0.019 & 0.019 \\
Politicians Collision
& Continuations          & 160 & 0.000 & 0.075 & 0.075 \\
Politicians Collision
& Fewshot                & 160 & 0.000 & 0.044 & 0.044 \\
\midrule
Sports Base
& Direct Questions       & 100 & 0.020 & 0.010 & 0.010 \\
Sports Base
& Paraphrased Questions  & 100 & 0.000 & 0.010 & 0.010 \\
Sports Base
& Contextual Questions   & 100 & 0.000 & 0.010 & 0.010 \\
Sports Base
& Cloze Prompts          & 100 & 0.000 & 0.010 & 0.010 \\
Sports Base
& Continuations          & 100 & 0.000 & 0.010 & 0.010 \\
Sports Base
& Fewshot                & 100 & 0.000 & 0.000 & 0.000 \\
\midrule
Sports Alias
& Direct Questions       & 200 & 0.000 & 0.000 & 0.000 \\
Sports Alias
& Paraphrased Questions  & 200 & 0.000 & 0.000 & 0.000 \\
Sports Alias
& Contextual Questions   & 200 & 0.000 & 0.000 & 0.000 \\
Sports Alias
& Cloze Prompts          & 200 & 0.000 & 0.000 & 0.000 \\
Sports Alias
& Continuations          & 200 & 0.000 & 0.000 & 0.000 \\
Sports Alias
& Fewshot                & 200 & 0.005 & 0.000 & 0.000 \\
\midrule
Sports Noise
& Direct Questions       & 160 & 0.013 & 0.044 & 0.056 \\
Sports Noise
& Paraphrased Questions  & 160 & 0.000 & 0.037 & 0.037 \\
Sports Noise
& Contextual Questions   & 160 & 0.000 & 0.037 & 0.037 \\
Sports Noise
& Cloze Prompts          & 160 & 0.000 & 0.037 & 0.037 \\
Sports Noise
& Continuations          & 160 & 0.000 & 0.050 & 0.050 \\
Sports Noise
& Fewshot                & 160 & 0.000 & 0.050 & 0.050 \\
\midrule
Sports Collision
& Direct Questions       & 158 & 0.013 & 0.013 & 0.019 \\
Sports Collision
& Paraphrased Questions  & 158 & 0.000 & 0.032 & 0.032 \\
Sports Collision
& Contextual Questions   & 158 & 0.000 & 0.025 & 0.025 \\
Sports Collision
& Cloze Prompts          & 158 & 0.000 & 0.032 & 0.032 \\
Sports Collision
& Continuations          & 158 & 0.000 & 0.032 & 0.032 \\
Sports Collision
& Fewshot                & 158 & 0.006 & 0.057 & 0.057 \\